\newtheorem{proposition}{Proposition}
\newcommand{\p}[1]{\textbf{#1.}}
\begin{document}

\title{Learning View-Disentangled Human Pose Representation by Contrastive Cross-View Mutual Information Maximization}

\author{Long Zhao$^{1,}$\thanks{This work was done while the author was a research intern at Google.} \quad Yuxiao Wang$^2$ \quad Jiaping Zhao$^2$ \quad Liangzhe Yuan$^2$ \quad Jennifer J.~Sun$^3$\\
Florian Schroff$^2$ \quad Hartwig Adam$^2$ \quad Xi Peng$^4$ \quad Dimitris Metaxas$^1$ \quad Ting Liu$^2$\\
\\
$^1$Rutgers University \quad $^2$Google Research \quad $^3$Caltech \quad $^4$University of Delaware\\
}

\maketitle

\begin{abstract}
   We introduce a novel representation learning method to disentangle pose-dependent as well as view-dependent factors from 2D human poses. The method trains a network using cross-view mutual information maximization (CV-MIM) which maximizes mutual information of the same pose performed from different viewpoints in a contrastive learning manner. We further propose two regularization terms to ensure disentanglement and smoothness of the learned representations. The resulting pose representations can be used for cross-view action recognition.
   
   To evaluate the power of the learned representations, in addition to the conventional fully-supervised action recognition settings, we introduce a novel task called single-shot cross-view action recognition. This task trains models with actions from only one single viewpoint while models are evaluated on poses captured from all possible viewpoints. We evaluate the learned representations on standard benchmarks for action recognition, and show that (i) CV-MIM performs competitively compared with the state-of-the-art models in the fully-supervised scenarios; (ii) CV-MIM outperforms other competing methods by a large margin in the single-shot cross-view setting; (iii) and the learned representations can significantly boost the performance when reducing the amount of supervised training data. Our code is made publicly available at \url{https://github.com/google-research/google-research/tree/master/poem}.
\end{abstract}

\section{Introduction}

\begin{figure}[t]
\begin{center}
  \includegraphics[width=\linewidth]{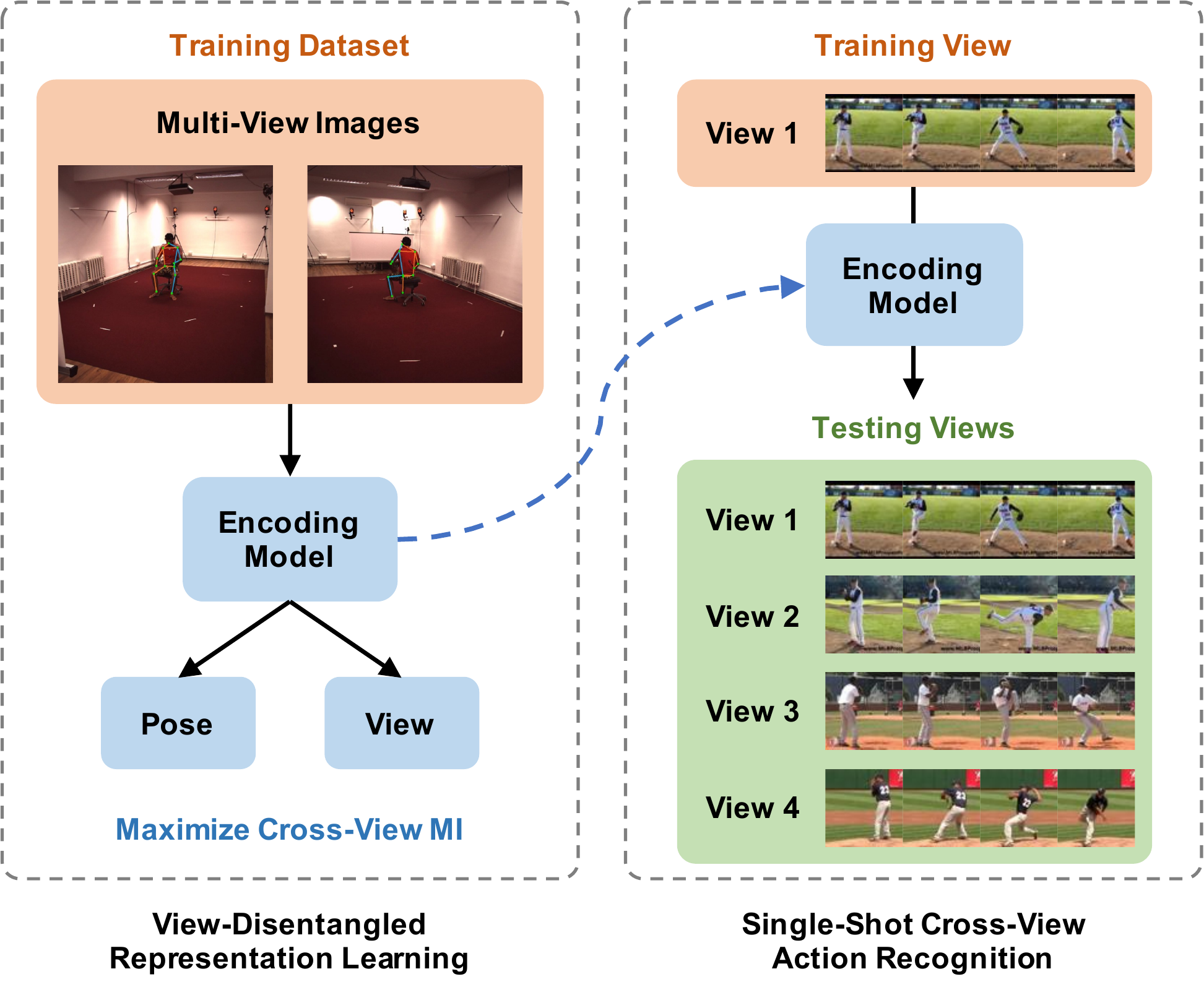}
\end{center}
  \vspace{-2.6mm}
\caption{{\bf Left:} We propose to learn view-disentangled representation for human poses by maximizing cross-view mutual information. {\bf Right:} The learned representation can be applied to downstream tasks such as single-shot cross-view action recognition.}
\label{fig:xview}
\end{figure}

Understanding human poses and actions is a fundamental problem in computer vision due to its broad applications in the real world, such as video content analysis, intelligent photography, AR/VR techniques, and human-computer interface. Recently, remarkable improvements have been achieved with deep learning approaches~\cite{du2017rpan,li2018co,nie2015joint,zhao2018learning,zhao2019semantic}. However, these data-driven approaches are usually vulnerable to changes of viewpoints. 
In particular, testing-time unseen viewpoints often lead to significant degradation in recognition performance~\cite{wang2014cross}.

To mitigate this issue, methods for cross-view action recognition~\cite{wang2014cross} have been proposed, where models are trained with a set of actions captured from different viewpoints simultaneously so that they can be applied to novel views unseen from training at testing time. Previous studies usually require extensive supervision from multiple views to learn view-invariant features~\cite{wang2018dividing,wang2014cross} or transferable representations~\cite{li2019episodic,liu2011cross} for action recognition. 
Collecting labeled action data at scale from multiple views can be expensive and challenging in the wild due to potential limitation of camera placement, scene and actor setup, \etc.

We address this challenge by proposing a novel view-disentangled representation learning approach. To train the representation model, we only require pairs of 2D poses captured from different viewpoints without additional task-relevant supervision, which are widely available in standard multi-view human action datasets~\cite{ionescu2013human3,shahroudy2016ntu}. Our target is to disentangle pose-dependent (view-invariant) and view-dependent representations from 2D poses, which has not been well-explored in existing works.

To achieve this, we train a representation-learning function, \ie, an encoder, following the Mutual Information (MI) maximization principle~\cite{bell1995information}. Specifically, in order to fulfill the view-disentanglement constraint, we propose to maximize the cross-view MI, \ie, the dependency between learned representations of the same pose from different views. In addition, we theoretically motivate two regularization terms that encourage disentanglement and smoothness of the learned representation to further improve its power. Our objective is optimized in a contrastive manner based on recent advances made in MI estimation~\cite{belghazi2018mutual,cheng2020club,hjelm2019learning,oord2018representation,ozair2019wasserstein}. Compared to approaches based on cross reconstruction~\cite{nie2020unsupervised}, the proposed approach yields stronger representative powers by using negative training pairs which provide an additional source of supervision~\cite{chen2020simple,hjelm2019learning}.

We show that the resulting pose representation can be used for action recognition in a fully-supervised setting. To further demonstrate its view-disentangled property, we introduce a novel and more challenging task, namely, single-shot cross-view action recognition. In this setting, recognition models are trained with 2D poses from one single view but expected to generalize to unseen views at testing time. This setting is highly practical for real-world applications: it only requires collecting training data from a fixed camera view, and the resulting recognition model can be applied to various difference views. Note that success in this task requires not only discriminative but also view-invariant representations for 2D poses. Fig.~\ref{fig:xview} summarizes our representation learning framework and its application to single-shot cross-view action recognition downstream task.

To sum up, our main contributions of this work include: (i) a novel objective to learn view-disentangled representation for 2D human poses by maximizing cross-view MI; (ii) two regularization techniques to guarantee disentanglement and smoothness of the learned representation; (iii) a newly proposed task called single-shot cross-view action recognition that can be used for evaluating view-invariant representation for human poses. We evaluate the proposed method on standard benchmarks for action recognition under scenarios of full-supervision, single-shot cross-view setting, and supervision with limited data. Experimental results show that our approach is comparable to the state of the art in the fully-supervised setting, while it can consistently and statistically significantly outperform competing methods under the other two scenarios.

\section{Related Work}

\p{Representation Learning} 
There has been a lot of recent progress on learning representations for visual and temporal data~\cite{bachman2019learning,chen2020simple,he2020momentum,han2019video,hjelm2019learning,oord2018representation}. These representations are often trained using unsupervised or self-supervised approaches. In particular, approaches based on contrastive learning (contrasting positive pairs with negative pairs) has been shown to be effective at learning visual representations~\cite{chen2020simple,han2019video,oord2018representation,tian2020contrastive}. Recent works have further investigated contrastive training objectives based on MI maximization, such as maximizing MI between different augmented ``views" of the same image~\cite{bachman2019learning} and between local and global features~\cite{hjelm2019learning,oord2018representation}. 

Our work aims to learn a representation on 2D poses instead of images and we apply contrastive learning to maximize MI of pose representations across camera views. Previous works on representation learning for 2D poses~\cite{sun2019view} have focused on studying the view-invariance property with triplet loss~\cite{schroff2015facenet}. In contrast, we differ in our goal, \ie, disentangling pose-dependent and view-dependent factors, and our approach, \ie, contrastive loss with MI maximization across camera views.





\p{View Disentanglement} When depicted in 2D space, human poses can differ in appearance due to changes in pose and changes in viewpoints. The ability to disentangle pose-dependent and view-dependent factors from human poses as well as objects are useful for a variety of downstream tasks, including video alignment~\cite{dwibedi2019temporal}, human re-identification~\cite{zheng2019pose}, action recognition~\cite{nie2020unsupervised}, and object classification \cite{ho2019pies}. Here, we explore view-disentanglement for pose-based action recognition.

Some studies focus on learning view-invariant representations for human poses~\cite{sun2019view} and objects~\cite{ho2019pies}. The learned representation space in these works is pose-dependent, and not view-dependent. Other works~\cite{lorenz2019unsupervised,nie2020unsupervised,peng2017reconstruction,rhodin2019neural,rhodin2018unsupervised,skafte2019explicit} also learns disentangled representations for human poses and images. In particular, \cite{rhodin2019neural,rhodin2018unsupervised} learns a 3D geometry-aware representation space for pose images, where rotation matrices can be applied to the representation to generate images from new views; \cite{lorenz2019unsupervised,skafte2019explicit} disentangle shape and appearance using generative modeling on images, and does not explicitly disentangle viewpoints. Closest to our work, \cite{nie2020unsupervised} also learns to disentangle poses and views on human poses. Our work differs in that we embed 2D poses, which can be extracted from images without using camera parameters, while their method relies on ground truth 3D poses as input. Additionally, instead of cross reconstruction, our approach is based on contrastive loss and MI maximization.


\begin{figure*}[t]
\begin{center}
  \includegraphics[width=.98\linewidth]{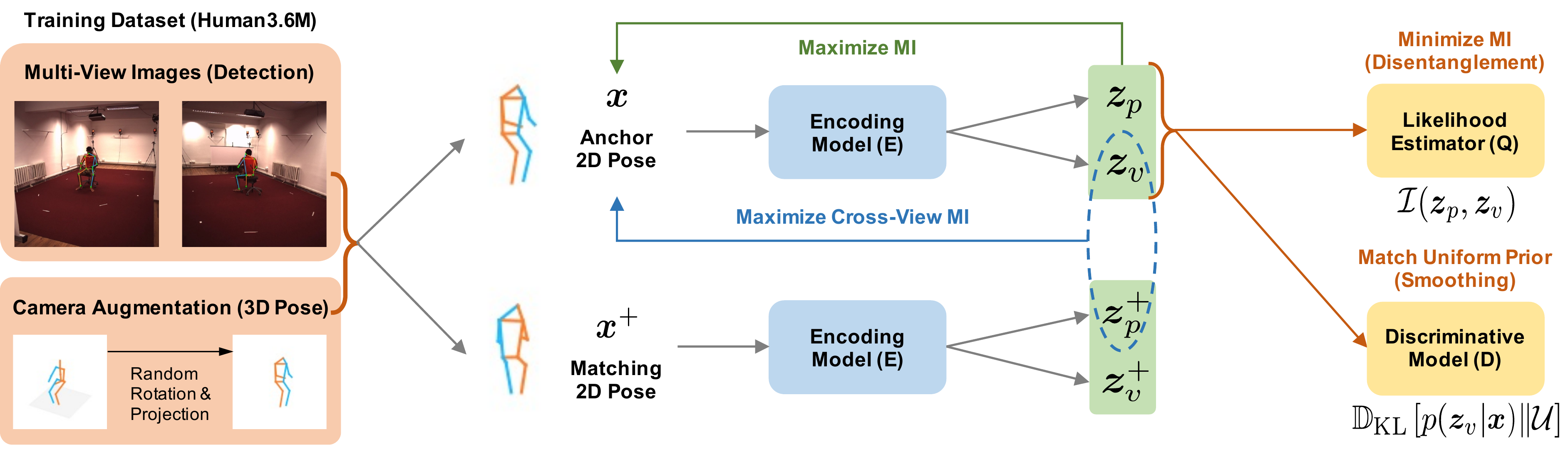}
\end{center}
  \vspace{-.8em}
  \caption{Overview of CV-MIM model training pipeline. Our model takes pairs of multi-view 2D poses detected from images or camera augmentation (optional) and produces pose representations $\bm{z}_p$ and view representations $\bm{z}_v$. $E$, $Q$ and $D$ are optimized alternatively.}
\label{fig:overview}
\end{figure*}

\p{Cross-View Learning for Poses} There have been great progress in cross-view action recognition with RGB videos. In the multi-view environment, many research works aim to address the issue of view invariance motivated by the availability of different modalities such as pose and depth. In this stream of research, most works use multiple modalities, \eg, RGB~\cite{li2018unsupervised,vyas2020multi,zhang2018adding}, depth~\cite{li2018unsupervised,rahmani2016histogram,vyas2020multi}, RGB+D~\cite{shahroudy2017deep}, or skeleton data~\cite{wang2018dividing,yan2018spatial,zhang2019view}. Another stream of research works~\cite{wang2014cross,li2019episodic} are interested in learning models by using poses from different camera views for cross-view action recognition. Our work belongs to this stream.

Multi-view pose information has been used for learning image representations for 3D pose estimation~\cite{rhodin2018unsupervised,rhodin2019neural} as well. In these works, the representation is trained to reconstruct images of poses from different camera views. While we also leverage multi-view pose data, our method is based on cross-view MI maximization. It can utilize negative samples during optimization for representation learning. 

\section{Approach}

We begin by summarizing the concept of mutual information and, along the way, introduce the notations. Mutual Information~(MI) is a fundamental measurement to quantify the relationship between random variables. Formally, it measures the dependence of two random variables $\bm{x}$ and $\bm{y}$:
\begin{equation}
\label{eq:mi}
\mathcal{I}(\bm{x}; \bm{y}) = \mathbb{E}_{p(\bm{x}, \bm{y})} \left[ \log \frac{p(\bm{x}, \bm{y})}{p(\bm{x})p(\bm{y})} \right],
\end{equation}
where $p(\bm{x}, \bm{y})$ is the joint probability distribution, while $p(\bm{x})$ and $p(\bm{y})$ are their marginals. In the context of self-supervised representation learning, mutual information can act as a measure of true dependence between observed data samples and learned representations. The objective is to maximize Eq.~\eqref{eq:mi} so that the learned representations retain the most information about the underlying data~\cite{bell1995information,hjelm2019learning,ozair2019wasserstein}. 

This work extends MI maximization principle to view-disentangled representation learning for human poses where view-dependent and pose-dependent representations are learned concurrently for an input 2D pose. As we will show, this objective can be obtained by maximizing cross-view MI. An overview of our approach is presented in Fig.~\ref{fig:overview}.

\subsection{Cross-View Mutual Information Maximization}

To set the stage, we let $\bm{x}^i \in \mathbb{R}^{2 \times N}$ denote the given 2D pose from the $i$-th view, where $N$ is the number of keypoints for representing the pose. We are interested in learning an encoding network $E$ producing a view representation $\bm{z}_v^i \in \mathbb{R}^d$ and a pose representation $\bm{z}_p^i \in \mathbb{R}^d$ from the input $\bm{x}^i$, while $\bm{z}_p^i$ and $\bm{z}_v^i$ are expected to be disentangled (mutually excluded). We define $(\bm{z}_p^j, \bm{z}_v^i)$ as the cross-view representation of a given 2D pose $\bm{x}^i$ from the $j$-th view, and it captures the amount of pose information that can be maintained from a different viewpoint. Then we can obtain our objective for view-disentangled representation learning:
\begin{equation}
\label{eq:cvmi}
\max \left[\sum_{i} \underbrace{\mathcal{I}(\bm{x}^i; \bm{z}_p^i, \bm{z}_v^i)}_\text{MI} + \sum_{i \neq j} \underbrace{\mathcal{I}(\bm{x}^i; \bm{z}^j_p, \bm{z}^i_v)}_\text{Cross-View MI}\right],
\end{equation}
where the first term is the conventional MI-based representation objective, and the second term which is proposed in this work maximizes the MI between the input 2D pose and its cross-view representations. Next, we show how this objective can be further simplified.

As the optimal view and pose representations are disentangled, we assume that $\bm{z}_p$ and $\bm{z}_v$ are simultaneously independent and conditionally independent, \ie, $p(\bm{z}_p, \bm{z}_v) = p(\bm{z}_p)p(\bm{z}_v)$ and $p(\bm{z}_p, \bm{z}_v|\bm{x}) = p(\bm{z}_p|\bm{x})p(\bm{z}_v|\bm{x})$. Based on this assumption, it can be easily shown that,
\begin{equation}
\label{eq:dis}
\mathcal{I}(\bm{x}^i; \bm{z}_p^j, \bm{z}_v^i) = \mathcal{I}(\bm{x}^i; \bm{z}_p^j) + \mathcal{I}(\bm{x}^i; \bm{z}_v^i).
\end{equation}
Then, by the Data Processing Inequality~\cite{cover2012elements}, we have that,
\begin{equation}
\label{eq:bound}
\begin{aligned}
\mathcal{I}(\bm{x}^i; \bm{z}_p^j, \bm{z}_v^i) &\geqslant \mathcal{I}(\bm{z}^i_p, \bm{z}^i_v; \bm{z}_p^j) + \mathcal{I}(\bm{z}^i_p, \bm{z}^i_v; \bm{z}_v^i)\\
&= \mathcal{I}(\bm{z}^i_p; \bm{z}^j_p) + \mathcal{I}(\bm{z}_v^i; \bm{z}_v^i)\\
&= \mathcal{I}(\bm{z}^i_p; \bm{z}^j_p) + \mathcal{H}(\bm{z}_v^i)\\
&\geqslant \mathcal{I}(\bm{z}^i_p; \bm{z}^j_p), 
\end{aligned}
\end{equation}
where $\mathcal{H}$ is the Shannon entropy. The equality in the second line is achieved since $\bm{z}_p$ and $\bm{z}_v$ are independent, and the last inequality holds due to the non-negativity of entropy.

After combing Eq.~\eqref{eq:bound} with Eq.~\eqref{eq:cvmi}, we achieve a relaxed formulation of cross-view MI maximization:
\begin{equation}
\label{eq:simple_cvmi}
\max \left[\sum_{i} \mathcal{I}(\bm{x}^i; \bm{z}_p^i, \bm{z}_v^i) + \sum_{i \neq j} \mathcal{I}(\bm{z}^i_p; \bm{z}^j_p)\right].
\end{equation}
The above objective is a lower bound of Eq.~\eqref{eq:cvmi}. Intuitively, the second term aims to maximize MI of the same pose but performed from different viewpoints. 

Both terms in Eq.~\eqref{eq:simple_cvmi} can be optimized by MI estimators~\cite{belghazi2018mutual,hjelm2019learning,oord2018representation,ozair2019wasserstein} which estimate a lower-bound of MI by training a classifier in a contrastive learning objective, \ie, it distinguishes between samples coming from the joint distribution $p(\bm{x}, \bm{z})$ and the product of marginals $p(\bm{x})p(\bm{z})$ of the input pose $\bm{x}$ and target representation $\bm{z}$ encoded by $E$. We use the Jensen-Shannon MI estimator~\cite{hjelm2019learning} to maximize Eq.~\eqref{eq:simple_cvmi} since it achieves a good balance between computational efficiency and performance. Given the representation $\bm{z}'$ which is a negative match of the input $\bm{x}$, maximizing $\mathcal{I}(\bm{x}; \bm{z})$ is equivalent to minimizing the following loss:
\begin{multline}
\label{eq:est_mi}
\min_E \mathcal{L}_\text{MI}(\bm{x}; \bm{z}) = \mathbb{E}_{(\bm{x}, \bm{z}) \sim p(\bm{x}, \bm{z})}[\rho(-f(\bm{x}, \bm{z}))] \\ - \mathbb{E}_{(\bm{x}, \bm{z}') \sim p(\bm{x})p(\bm{z})}[-\rho(f(\bm{x}, \bm{z}'))],
\end{multline}
where $\rho(x) = \log (1 + e^x)$ denotes the softplus activation, and $f$ is a discriminator function modeled by a network.

\subsection{Representation Disentanglement}

The major assumption we made when deriving Eq.~\eqref{eq:dis} is that $\bm{z}_v$ (view representation) and $\bm{z}_p$ (pose representation) are disentangled during optimization, \ie, they are simultaneously and conditionally independent. Therefore, we introduce a regularization term $\mathcal{L}_\text{inter}$ to guarantee disentanglement between $\bm{z}_v$ and $\bm{z}_p$ based on their MI. By minimizing $\mathcal{L}_\text{inter}$, we encourage the information in these two random variables are mutually exclusive. 

However, lower-bound MI estimators are inapplicable to disentanglement because  they are inconsistent to MI minimization tasks. Hence, we leverage the contrastive log-ratio upper-bound MI estimator~\cite{cheng2020club} which estimates the probability log-ratio between the conditional log-likelihood of positive sample pair $\log p(\bm{z}_v|\bm{z}_p)$ and negative sample pair $\log p(\bm{z}_v'|\bm{z}_p)$. Unfortunately, the conditional 
relation between $\bm{z}_v$ and $\bm{z}_p$ is unavailable in our case. To address this issue, we use a variational distribution $q(\bm{z}_v|\bm{z}_p)$ which is predicted by a neural network $Q$ to approximate $p(\bm{z}_v|\bm{z}_p)$. After combining all these together, we reach the following objective function for the encoder $E$:
\begin{multline}
\min_E \mathcal{L}_\text{inter}(\bm{z}_p; \bm{z}_v) = \mathbb{E}_{(\bm{z}_p, \bm{z}_v) \sim p(\bm{z}_p, \bm{z}_v)}[\log q(\bm{z}_v|\bm{z}_p)] \\ - \mathbb{E}_{(\bm{z}_p, \bm{z}_v') \sim p(\bm{z}_p)p(\bm{z}_v)}[\log q(\bm{z}_v'|\bm{z}_p)].
\end{multline}
While at the same time, $Q$ is trained to minimize the KL-divergence between the true conditional probability distribution $p(\bm{z}_v|\bm{z}_p)$ and variational one $q(\bm{z}_v|\bm{z}_p)$:
\begin{equation}
\label{eq:var}
\min_Q \mathcal{L}_\text{KL}(\bm{z}_p, \bm{z}_v) = \mathbb{D}_\text{KL}\left[q(\bm{z}_v|\bm{z}_p) \middle\| p(\bm{z}_v|\bm{z}_p)\right].
\end{equation}
For simplicity, we assume $q(\bm{z}_v|\bm{z}_p)$ follows a Gaussian distribution in this work, and then Eq.~\eqref{eq:var} can be efficiently solved by maximum likelihood estimation.

\subsection{Representation Smoothing}

From Eq.~\eqref{eq:bound} we can see that maximizing the entropy of the view representation $\mathcal{H}(\bm{z}_v)$ is also desirable during optimization. However, this term is intractable due to the high dimensionality of the representation space. As we are not concerned with its precise value, we present an alternative maximization strategy from the aspect of prior matching.

Given a bounded interval $[a, b]$, entropy is maximised when the probability distribution is uniform. By this observation and $\mathcal{H}(\bm{z}_v) \geqslant \mathcal{H}(\bm{z}_v|\bm{x})$, we impose the maximum-entropy constraint onto learned representations by implicitly training the encoder $E$ so that the push-forward distribution $p(\bm{z}_v|\bm{x})$ matches a uniform prior $\mathcal{U}(a, b)$:
\begin{equation}
\label{eq:prior_div}
\min \left\{\mathbb{D}_\text{KL}\left[p(\bm{z}_v|\bm{x}) \middle\| \mathcal{U}(a, b)\right]\right\}.
\end{equation}
This is achieved by training a discriminator $D$ to estimate the divergence in Eq.~\eqref{eq:prior_div}, and then training the encoder $E$ to minimize this estimation. They play the minimax game:
\begin{multline}
\min_E \max_D \mathcal{L}_\text{prior}(\bm{z}) = \mathbb{E}_{\bm{z} \sim \mathcal{U}(a, b)}[\log D(\bm{z})] \\ + \mathbb{E}_{\bm{z} \sim p(\bm{z} | \bm{x})}[\log(1 - D(\bm{z}))].
\end{multline}
To keep it simple, we optimize this loss term on $[0, 1]$ which is done by setting the prior to $\mathcal{U}(0, 1)$ and re-scaling representations via a sigmoid activation. In practice, we match both view and pose representations to this prior since it also benefits the regularization of pose dimensions.

Intuitively, the loss term $\mathcal{L}_\text{prior}$ ensures the learned representations to be smooth~\cite{wang2020understanding}, as we do not assume any special prior on human poses and camera views. Compared with previous works~\cite{alemi2017deep,higgins2017beta,hjelm2019learning,kingma2014auto,sun2019view} that also target representation regularization, our approach provides a more intuitive motivation in favor of uniform prior over other common priors, \eg, a Gaussian distribution.

\subsection{Full Objective}

All three objectives, \ie, MI maximization, representation disentanglement and smoothing (prior matching), can be used together, and doing so we arrive at our full objective for \emph{Cross-View Mutual Information Maximization}~(CV-MIM). We let $\bm{x}^+$ represent a positive match of the input pose $\bm{x}$ which shares the same action but performed from another view, and $\bm{z}_p^+$ be its pose representation, then the complete objective is defined as:
\begin{multline}
\label{eq:full_obj}
\min_E \left[\mathcal{L}_\text{MI}(\bm{x}; \bm{z}_p \odot \bm{z}_v) + \lambda_1 \mathcal{L}_\text{MI}(\bm{z}_p; \bm{z}_p^+)\right] \\ + \min_E \lambda_2 \mathcal{L}_\text{inter}(\bm{z}_p; \bm{z}_v) + \min_Q \mathcal{L}_\text{KL}(\bm{z}_p, \bm{z}_v) \\ + \min_E \max_D \lambda_3 \mathcal{L}_\text{prior}(\bm{z}_p \oplus \bm{z}_v),
\end{multline}
where $\lambda_1$, $\lambda_2$, and $\lambda_3$ are positive parameters that balance the magnitude of each term; $\odot$ is a pre-defined fusion operation which combines pose and view representations; and $\oplus$ denotes concatenation. Note that $E$, $Q$ and $D$ are optimized in an alternative way during network training.

\p{Discussion} It is worth discussing two important properties of our formulation. First, our approach differs from cross-reconstruction based methods~\cite{nie2020unsupervised,peng2017reconstruction,rhodin2019neural,rhodin2018unsupervised} from two perspectives: (i) we do not explicitly perform reconstruction of the input in the objective, which is proven to be a lower bound of MI~\cite{hjelm2019learning}; (ii) our objective trains the model in a contrastive manner, where negative sample pairs are involved to provide additional supervisions and thus manage to improve the power of representation learning.

Second, in addition to Eq.~\eqref{eq:full_obj}, an alternative way of cross-view MI maximization is to optimize Eq.~\eqref{eq:cvmi} directly through lower-bound MI estimators. However, we find this alternate leads to significant performance degradation in the experiments. This is due to the fact that lower-bound MI estimators are not accurate estimations of MI which suffer from high variance~\cite{song2020understanding}. Instead, our formulation is able to address this drawback by decomposing the single objective into multiple simpler criteria.
\section{Experiments}

\subsection{Datasets}

\p{Human3.6M} Ionescu~\etal~\cite{ionescu2013human3} built the in-lab dataset with synchronized multi-view images and 3D poses. 
We follow the standard protocol in the literature and use all four camera views of subjects S1, S5, S6, S7, and S8 for model training. Note that this dataset is only used for learning pose representation in this work, \ie, training the encoder, where we do not use any action labels.

We experiment on the following two datasets for action recognition in the fully-supervised scenario where training sets include all views, and the single-shot cross-view setting where actions from only one view are used for training. 

\p{Penn Action} The Penn Action dataset~\cite{zhang2013actemes} consists of 2,326 video sequences of 15 action categories captured from four different views. 
We follow the official training/testing split~\cite{zhang2013actemes} and~\cite{nie2015joint} to remove the action of playing guitar and several videos due to target person invisibility. All videos are up-sampled to 332 frames for action recognition. In the proposed single-shot cross-view setting, videos from one single view in the training set are leveraged for training and videos from all views are used for testing. The final performance is measured by the average top-1 accuracy over all views. 

\p{NTU-RGB+D} This dataset~\cite{shahroudy2016ntu} contains 56,000 video clips in 60 action classes performed by 40 actors captured in-lab environments. Each clip has at most two subjects. Three cameras are used for recording different horizontal views simultaneously, and each action is performed twice towards the left and right cameras, respectively. Thus, there are six views contained in this dataset. Following~\cite{yan2018spatial}, we pad every clip by replaying the sequence from the start to have 300 frames. Furthermore, we only use single-person action categories and the main actor in each video.

There are two common evaluation benchmarks~\cite{shahroudy2016ntu} for action recognition on this dataset. In cross-subject benchmark, 40 subjects are split into training and testing groups, where each group consists of 20 subjects. In cross-view benchmark, training clips come from the second and third cameras, while the evaluation clips are all from the first camera. In this work, we introduce a new evaluation setting for single-shot cross-view action recognition. Specifically, we divide the training set of cross-subject benchmark into six splits according to cameras and replication numbers so that actions performed from only one view are contained in each split, and testing groups including all views and remaining subjects are used for evaluation. We report the average performance of all models trained using the six splits.

\begin{table*}[t]
\begin{center}
\resizebox{.76\linewidth}{!}{
\begin{tabular}{c|c|cccc|c}
\toprule
Methods & VD & Left & Right & Front & Back & Average\\
\midrule
Res-TCN~\cite{kim2017interpretable} & & 86.83 $\pm$ 0.50 & 90.80 $\pm$ 1.09 & 75.99 $\pm$ 1.22 & 75.23 $\pm$ 3.05 & 82.21 $\pm$ 0.71\\
Temporal ConvNet & & 82.78 $\pm$ 1.38 & 88.78 $\pm$ 1.35 & 72.69 $\pm$ 1.83 & 69.70 $\pm$ 1.52 & 78.49 $\pm$ 0.91\\
\midrule
Auto-Encoder & \checkmark & 85.89 $\pm$ 0.46 & 90.55 $\pm$ 0.85 & 77.45 $\pm$ 2.02 & 87.98 $\pm$ 0.93 & 85.47 $\pm$ 0.75\\
VAE~\cite{kingma2014auto} & \checkmark & 87.22 $\pm$ 1.19 & 92.14 $\pm$ 0.39 & 75.87 $\pm$ 2.14 & 88.76 $\pm$ 1.14 & 86.00 $\pm$ 0.72\\
$\beta$-VAE~\cite{alemi2017deep,higgins2017beta} & \checkmark & 85.86 $\pm$ 1.27 & 90.03 $\pm$ 1.24 & 75.83 $\pm$ 1.06 & 83.56 $\pm$ 1.58 & 83.82 $\pm$ 0.60\\
InfoNCE~\cite{oord2018representation} & \checkmark & 87.47 $\pm$ 0.85 & 89.25 $\pm$ 0.74 & 73.30 $\pm$ 0.59 & 83.05 $\pm$ 1.55 & 83.27 $\pm$ 0.55\\
DIM~\cite{hjelm2019learning} & \checkmark & 81.67 $\pm$ 0.70 & 82.64 $\pm$ 0.67 & 76.08 $\pm$ 1.75 & 80.17 $\pm$ 1.29 & 80.14 $\pm$ 0.67\\
Pr-VIPE~\cite{sun2019view} & & 90.06 $\pm$ 0.38 & 89.36 $\pm$ 0.68 & 85.11 $\pm$ 0.69 & 91.58 $\pm$ 0.76 & 89.03 $\pm$ 0.40\\
\midrule
\textbf{CV-MIM} & \checkmark & \textbf{91.82 $\pm$ 0.30} & \textbf{93.73 $\pm$ 0.27} & \textbf{88.81 $\pm$ 0.53} & \textbf{92.65 $\pm$ 0.32} & \textbf{91.75 $\pm$ 0.24}\\
\bottomrule
\end{tabular}}
\end{center}
\caption{Classification accuracy (\%) and standard deviation of models on Penn Action~\cite{zhang2013actemes} with the setting of single-shot cross-view action recognition. Each time, models are trained using one of the left, right, front, and back views, and evaluated on all four views. We highlight view-disentangled (VD) methods. Results are averaged over five runs; best performances are highlighted in bold.}
\label{tbl:penn_sscv}
\end{table*}

\begin{table}[t]
\begin{center}
\resizebox{.92\linewidth}{!}{
\begin{tabular}{c|c|ccc|c}
\toprule
Methods & VD & RGB & Flow & Pose & Accuracy\\
\midrule
Nie~\etal~\cite{nie2015joint} & & \checkmark & & \checkmark & 85.5\\
Cao~\etal~\cite{cao2017body} & & & \checkmark & \checkmark & 95.3\\
& & \checkmark & \checkmark & & 98.1\\
Du~\etal~\cite{du2017rpan} & & \checkmark & \checkmark & \checkmark & 97.4\\
Liu~\etal~\cite{liu2018recognizing} & & \checkmark & & \checkmark & 91.4\\
Luvizon~\etal~\cite{luvizon2020multi} & & \checkmark & & \checkmark & 98.7\\
Res-TCN~\cite{kim2017interpretable} & & & & \checkmark & 98.8\\
Temporal ConvNet & & & & \checkmark & 98.5\\
\midrule
Auto-Encoder & \checkmark & & & \checkmark & 97.7\\
VAE~\cite{kingma2014auto} & \checkmark & & & \checkmark & 97.6\\
$\beta$-VAE~\cite{alemi2017deep,higgins2017beta} & \checkmark & & & \checkmark & 97.7\\
InfoNCE~\cite{oord2018representation} & \checkmark & & & \checkmark & 97.5\\
DIM~\cite{hjelm2019learning} & \checkmark & & & \checkmark & 97.3\\
Pr-VIPE~\cite{sun2019view} & & & & \checkmark & \textbf{98.4}\\
\midrule
\textbf{CV-MIM} & \checkmark & & & \checkmark & \underline{98.1}\\
\bottomrule
\end{tabular}}
\end{center}
\caption{Comparisons of top-1 action recognition accuracy (\%) on Penn Action~\cite{zhang2013actemes}. The check marks indicate the input to each method, including image pixels (RGB), optical flow (Flow), and model-estimated 2D pose (Pose). Top two performances of representation learning models are highlighted in bold and underline.}
\label{tbl:penn_fs}
\end{table}

\subsection{Implementation Details}

Our approach does not require a particular 2D pose estimator, as long as it is reasonable accurate. We use~\cite{papandreou2017towards} in our experiments. All detected keypoints of a 2D pose are then converted into a skeleton representation that consists of 13 joints according to the keypoint definition in~\cite{sun2019view}. We treat two poses as a positive pair if they are projected from the same 3D pose.


\p{Camera Augmentation} We perform camera augmentation to improve the model robustness to large variations in camera viewpoints when applied to downstream tasks. When we train only with detected 2D keypoints in training images, we are constrained to the camera views in the training set. To reduce overfitting to these camera views, we perform camera augmentation by generating projected 2D keypoints from 3D poses at random views. For random rotation in camera augmentation, we follow~\cite{sun2019view} and uniformly sample azimuth angle between $\pm 180^\circ$, elevation between $\pm 30^\circ$, and roll between $\pm 30^\circ$. During model training, we use an even mixture of detected and projected 2D keypoints from different views to form positive 2D pose pairs.

\p{Network Training} The backbone network architecture for our model is based on~\cite{martinez2017simple}. We use $d = 32$ for both pose and view representations as a good trade-off between model size and accuracy. The discriminator function $f$ in Eq.~\eqref{eq:est_mi} is implemented by the encode-and-dot-product architecture~\cite{hjelm2019learning} which enables us to use large numbers of positive/negative samples, and mixture-of-experts~\cite{shi2019variational} is employed for representation fusion. To weigh different losses in Eq.~\eqref{eq:full_obj}, we set $\lambda_1 = 5.0$, $\lambda_2 = 0.5$, and $\lambda_3 = 1.0$ such that all the loss terms have the same order of magnitude and did not densely tune them. Our implementation is in TensorFlow, and the model is trained with Tesla V100 GPUs. AdaGrad~\cite{duchi2011adaptive} with a learning rate of 0.02 is used for optimization, and we train the model for $5\times10^6$ iterations with mini-batches of size 256. The encoder operates on a single pose and is fixed for downstream tasks. More details on network architectures and model training are provided in the supplementary materials.

\begin{table*}[t]
\begin{center}
\resizebox{.82\linewidth}{!}{
\begin{tabular}{c|c|cccccc|c}
\toprule
Methods & VD & C1-R1 & C1-R2 & C2-R1 & C2-R2 & C3-R1 & C3-R2 & Average\\
\midrule
Res-TCN~\cite{kim2017interpretable} & & 40.6 / 69.6 & 39.9 / 66.8 & 30.7 / 53.3 & 48.1 / 74.5 & 48.2 / 75.5 & 29.8 / 55.3 & 39.6 / 65.8\\
ST-GCN~\cite{yan2018spatial} & & 43.3 / 73.1 & 44.1 / 72.8 & 30.7 / 57.5 & 51.4 / 79.7 & 53.1 / 82.5 & 29.7 / 59.2 & 42.1 / 70.8\\
HCN~\cite{li2018co} & & 52.5 / 80.3 & 49.8 / 76.9 & 37.2 / 63.8 & 55.5 / 86.7 & 55.3 / 83.8 & 39.4 / 69.3 & 48.3 / 76.8\\
\midrule
Auto-Encoder & \checkmark & 43.6 / 75.6 & 39.6 / 74.9 & 29.4 / 61.7 & 45.7 / 77.7 & 41.3 / 73.0 & 33.4 / 70.0 & 38.8 / 72.2\\
VAE~\cite{kingma2014auto} & \checkmark & 50.2 / 81.5 & 50.4 / 80.4 & 38.5 / 70.4 & 54.1 / 82.8 & 54.7 / 82.1 & 37.6 / 69.6 & 47.6 / 77.8\\
$\beta$-VAE~\cite{alemi2017deep,higgins2017beta} & \checkmark & 49.1 / 80.1 & 49.4 / 80.7 & 41.0 / 72.9 & 52.2 / 82.0 & 52.7 / 81.4 & 35.8 / 70.3 & 46.7 / 77.9\\
InfoNCE~\cite{oord2018representation} & \checkmark & 43.0 / 75.7 & 43.0 / 74.3 & 36.4 / 65.9 & 46.4 / 76.7 & 49.0 / 77.4 & 36.3 / 68.2 & 42.3 / 73.0\\
DIM~\cite{hjelm2019learning} & \checkmark & 42.1 / 74.4 & 42.5 / 74.3 & 32.3 / 61.8 & 45.7 / 74.0 & 43.9 / 71.3 & 33.3 / 65.6 & 40.0 / 70.2\\
Pr-VIPE~\cite{sun2019view} & & 56.1 / 85.8 & 57.9 / 85.5 & 50.7 / 84.1 & 57.3 / 84.5 & 55.9 / 84.1 & 50.9 / 83.5 & 54.8 / 84.6\\
\midrule
\textbf{CV-MIM} & \checkmark & \textbf{58.9} / \textbf{87.4} & \textbf{59.9} / \textbf{87.3} & \textbf{52.6} / \textbf{84.8} & \textbf{58.3} / \textbf{84.9} & \textbf{57.9} / \textbf{85.0} & \textbf{51.9} / \textbf{84.9} & \textbf{56.6} / \textbf{85.7}\\
\bottomrule
\end{tabular}}
\end{center}
\caption{Results of top-1 and top-5 action recognition accuracy (\%) on NTU-RGB+D~\cite{shahroudy2016ntu} with the setting of single-shot cross-view action recognition. C1, C2, and C3 are the camera identifiers; R1 and R2 are the replication numbers; one combination of them forms a unique camera view. Each time, models are trained using one view, and evaluated on all six views. We highlight view-disentangled (VD) representation learning methods. Best performances are highlighted in bold.}
\label{tbl:ntu_sscv}
\end{table*}


\begin{table}[t]
\begin{center}
\resizebox{.96\linewidth}{!}{
\begin{tabular}{c|c|ccc|cc}
\toprule
Methods & VD & RGB & Depth & Pose & CS & CV\\
\midrule
Vyas~\etal~\cite{vyas2020multi} & & \checkmark & & & 83.3 & 89.3\\
& & & \checkmark & & 70.8 & 77.5\\
Res-TCN~\cite{kim2017interpretable} & & & & \checkmark & 82.2 & 90.2\\
ST-GCN~\cite{yan2018spatial} & & & & \checkmark & 82.0 & 91.3\\
HCN~\cite{li2018co} & & & & \checkmark & 81.0 & 90.1\\
\midrule
Auto-Encoder & \checkmark & & & \checkmark & 62.5 & 71.8\\
VAE~\cite{kingma2014auto} & \checkmark & & & \checkmark & 76.8 & 88.7\\
$\beta$-VAE~\cite{alemi2017deep,higgins2017beta} & \checkmark & & & \checkmark & 76.1 & 88.8\\
InfoNCE~\cite{oord2018representation} & \checkmark & & & \checkmark & 74.1 & 82.7\\
DIM~\cite{hjelm2019learning} & \checkmark & & & \checkmark & 73.6 & 82.4\\
Pr-VIPE~\cite{sun2019view} & & & & \checkmark & \underline{77.7} & \textbf{89.7}\\
\midrule
\textbf{CV-MIM} & \checkmark & & & \checkmark & \textbf{77.8} & \underline{89.5}\\
\bottomrule
\end{tabular}}
\end{center}
\caption{Comparisons of top-1 action recognition accuracy (\%) on NTU-RGB+D~\cite{shahroudy2016ntu} with Cross-Subject (CS) and Cross-View (CV) settings. The check marks indicate the input to each method, including image pixels (RGB), depth, and model-estimated 2D pose (Pose). We highlight view-disentangled (VD) representation learning methods. Top two performances of representation learning models are highlighted in bold and underline, respectively.} 
\label{tbl:ntu_fs}
\end{table}

\subsection{Action Recognition}

We evaluate our approach in the downstream task of action recognition over a variety of settings including full-supervision, the proposed single-shot cross-view scenario, and limited-supervision.

\p{Baselines} For fully-supervised baselines taking 2D poses, we compare our approach with three state-of-the-art methods based on temporal convolutions: Res-TCN~\cite{kim2017interpretable}, ST-GCN~\cite{yan2018spatial}, and HCN~\cite{li2018co}. We also compare to other state-of-the-art methods using different input modalities.

For representation learning, generative models are commonly used for building representations. Although their target domains are different, they usually optimize the following objective based on cross reconstruction~\cite{nie2020unsupervised,peng2017reconstruction,rhodin2019neural,rhodin2018unsupervised} for view-disentangled representation learning:
\begin{equation}
\label{eq:rec_obj}
\min_{E, G} \left[\| \bm{x} - G(\bm{z}_p, \bm{z}_v)\|_2^2 + \| \bm{x} - G(\bm{z}_p^+, \bm{z}_v)\|_2^2\right],
\end{equation}
where $G$ denotes the decoder, and $\|\cdot\|_2$ denotes the $L2$ distance. In the same spirit, we implement three cross-reconstruction baselines according to Eq.~\eqref{eq:rec_obj} using auto-encoder, VAE~\cite{kingma2014auto} and $\beta$-VAE~\cite{alemi2017deep,higgins2017beta}. Their backbone networks of the encoder and decoder are both based on~\cite{martinez2017simple}. Moreover, we implement two MI-based counterparts optimizing Eq.~\eqref{eq:cvmi} through InfoNCE~\cite{oord2018representation} and DIM~\cite{hjelm2019learning}. All these representation learning approaches predict both view and pose representations like our algorithm and thus are our main competing approaches. We also include Pr-VIPE~\cite{sun2019view} for comparison since they learn view-invariant embeddings for human poses as well, but we note that they do not produce view representations.

\begin{figure}[t]
\begin{center}
   \includegraphics[width=.92\linewidth]{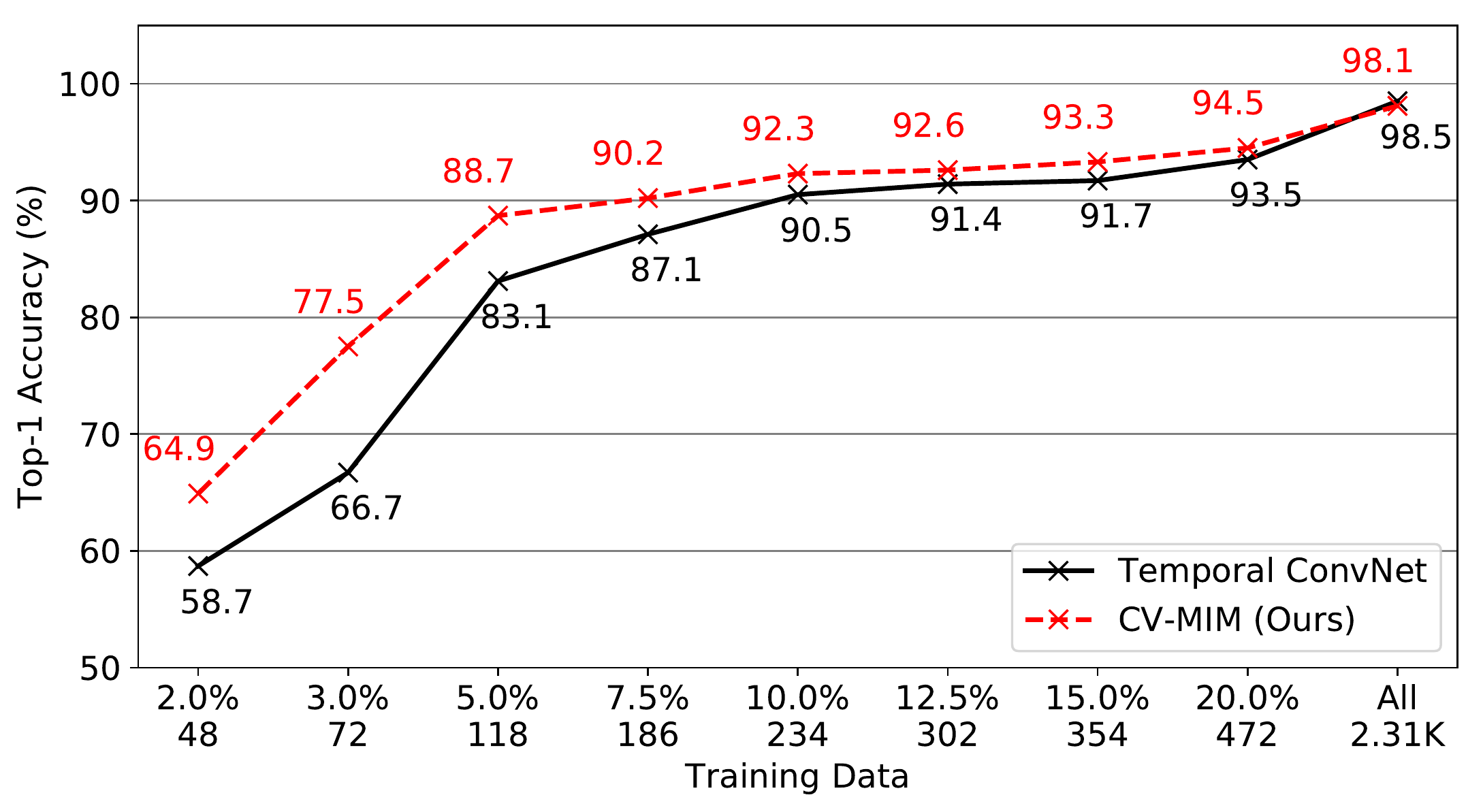}\\
   \includegraphics[width=.92\linewidth]{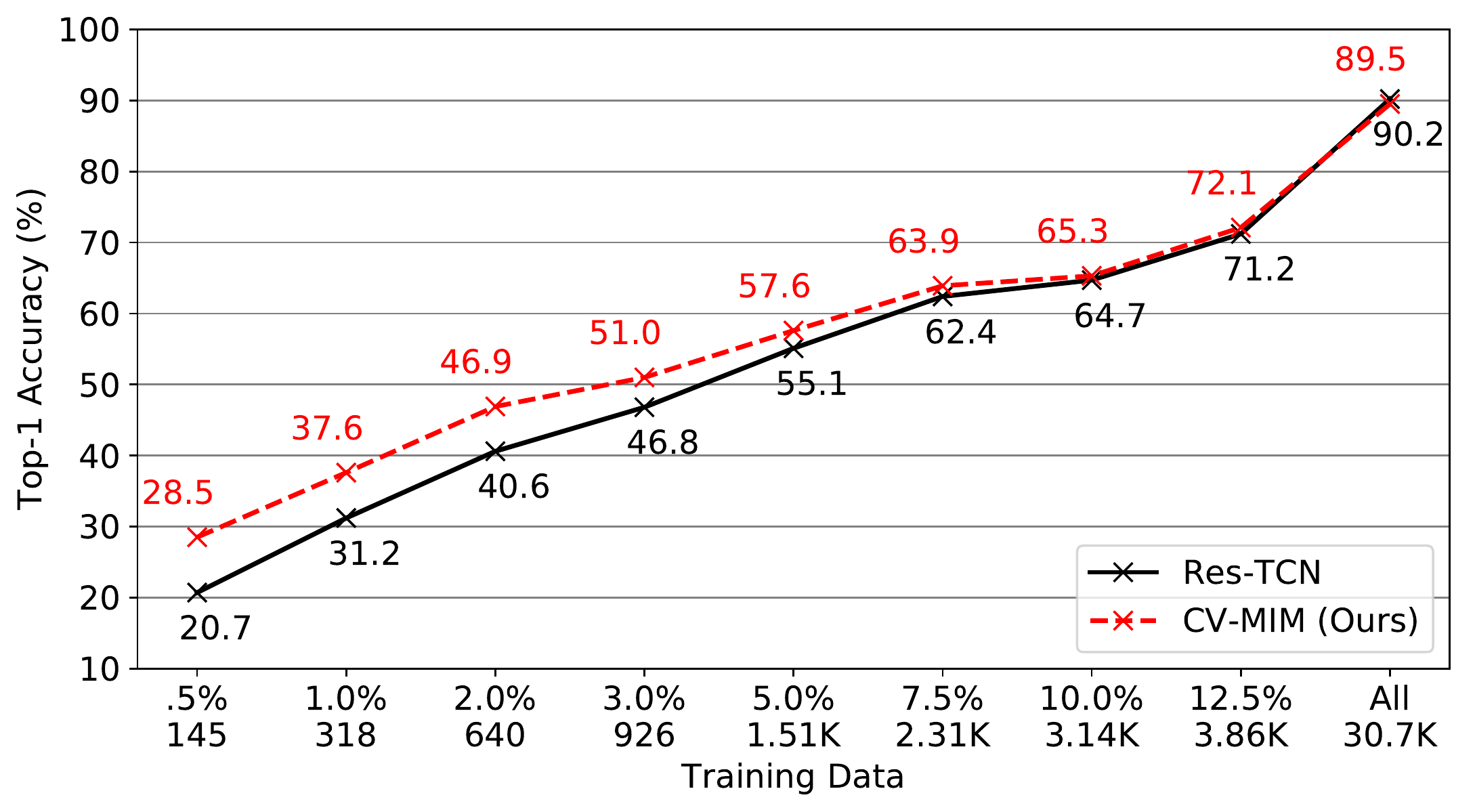}
\end{center}
\vspace{-1em}
   \caption{Recognition accuracy when limited supervisions are provided on Penn Action~\cite{zhang2013actemes} (\textbf{top}) and NTU-RGB+D~\cite{shahroudy2016ntu} (\textbf{bottom}).}
\label{fig:limit}
\end{figure}

\begin{figure*}[t]
\begin{center}
  \includegraphics[width=.94\linewidth]{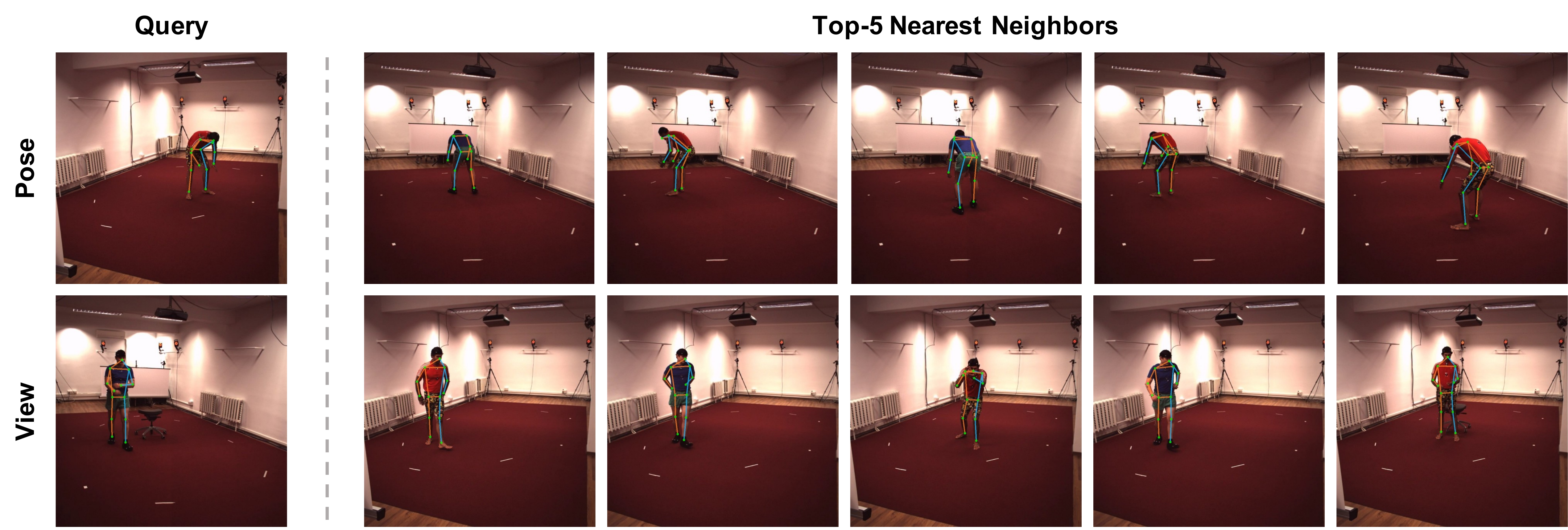}
\end{center}
\vspace{-.6em}
  \caption{Nearest neighbors in the representation space using subjects S9 and S11 on Human3.6M~\cite{ionescu2013human3}. The first row uses pose representations; the second uses view representations. On each row, we show the query on the left and its 5 nearest neighbors on the right.}
\label{fig:retrieval}
\end{figure*}

\p{Results on Penn Action} We start by evaluating our method on Penn Action~\cite{zhang2013actemes}. A simple temporal convolution network is used for aggregating temporal features from per-frame pose representations. See supplementary materials for detailed architecture and training setup. Tables~\ref{tbl:penn_sscv} and~\ref{tbl:penn_fs} show the results in the single-shot cross-view and fully-supervised settings, respectively.

We observe that our approach outperforms other view-disentangled as well as fully-supervised methods by a large margin in the single-shot setting and presents the lowest variance in accuracy. It is also worth mentioning that our results are substantially better than those baselines directly optimizing Eq.~\eqref{eq:cvmi}, which demonstrates the effectiveness of our refined formulation proposed in Eq.~\eqref{eq:full_obj}. Additionally, we match the state of the art in the fully-supervised setting and even yield better results than models using multiple input modalities.

\p{Results on NTU-RGB+D} We continue to experiment on NTU-RGB+D~\cite{shahroudy2016ntu}. The results under the same settings are reported in Tables~\ref{tbl:ntu_sscv} and~\ref{tbl:ntu_fs}, respectively. We observe that our model achieves the best performance in the single-shot setting while achieving competitive results in the fully-supervised setting. Interestingly, some cross-reconstruction models fail because of the large variances in poses and viewpoints present in this dataset. In contrast, our model is robust to these changes. 
From Table~\ref{tbl:ntu_fs}, we also observe that there is in general a considerable performance gap between the fully-supervised methods and our method in the cross-subject experiment but not in the cross-view experiment. This indicates our learned representation is not subject-invariant, which could be potentially solved by training with more subjects or augmented skeletons.

\p{Training with Limited Supervisions} We further investigate model performances when supervised data is limited. In this experiment, we use the same setup as the fully-supervised setting described above except that the amount of supervised training samples is varied. We report the results in Fig.~\ref{fig:limit}. We find that our model consistently improves the fully-supervised baselines by a notable margin when only a limited number of training samples are available. This shows that the learned pose representations can capture the semantics of 2D poses in a meaningful way which reduces the amount of supervision needed for the downstream task. We also provide additional comparisons with other representation learning methods under the same setting in the supplementary materials.

\begin{table}[t]
\begin{center}
\resizebox{.92\linewidth}{!}{
\begin{tabular}{c|ccc|c}
\toprule
Methods & Concat & Product & Mixture & Accuracy\\
\midrule
CV-MIM (full)  & \checkmark & & & 90.5\\
& & \checkmark & & 90.2\\
& & & \checkmark & \textbf{91.8}\\
\midrule
w/o $\mathcal{L}_\text{inter}$ & & & \checkmark & 86.1\\
w/o $\mathcal{L}_\text{prior}$ & & & \checkmark & 89.3\\
\bottomrule
\end{tabular}}
\end{center}
\caption{Ablation study on the fusion operation and two regularization losses used in Eq.~\eqref{eq:full_obj} on Penn Action~\cite{zhang2013actemes}.}
\label{tbl:ablation}
\end{table}

\subsection{Ablation Study}

We perform the ablative analysis to better understand the design choices of our approach from two perspectives: the fusion operation to combine view and pose representations and the effectiveness of two regularization losses used in Eq.~\eqref{eq:full_obj}. We explore three choices of the fusion operation in this experiment: concatenation, product-of-experts~\cite{wu2018multimodal}, and mixture-of-experts~\cite{shi2019variational}. Table~\ref{tbl:ablation} shows the results on Penn Action~\cite{zhang2013actemes} in the single-shot setting. We can see that our model is robust to the form of fusion operation thanks to the disentanglement loss. We select mixture-of-experts as the default fusion operation due to its highest performance. Furthermore, removing either regularization loss in Eq.~\eqref{eq:full_obj} results in a significant performance downgrade, which demonstrates their effectiveness.

\subsection{Qualitative Results} 

Last but not least, we show qualitative results when using the learned representations of our model for nearest neighbor retrieval. In Fig.~\ref{fig:retrieval}, we show that our pose representations can successfully find similar poses from different views in the testing set of Human3.6M~\cite{ionescu2013human3}. Interestingly, we also show that the learned view representations can retrieve frames captured from a viewpoint that is very similar to the query while the poses are different. More visual results are provided in the supplementary materials. 

\section{Conclusion}

We present CV-MIM, a representation learning approach to encode both pose-dependent and view-dependent representations for 2D human poses by maximizing cross-view MI. 
We further motivate two regularization losses to encourage disentanglement and smoothness of the learned representations from a theoretical perspective. 
We show that using the learned pose representation achieves significant improvement from existing representation learning methods on downstream action recognition tasks, and even outperforms fully-supervised baselines in many settings.
We also demonstrate the learned view representation can be directly applied to similar view retrieval among different poses.
CV-MIM focuses on single person representation, and for future work, we will investigate its multi-person extension.



\begin{appendices}

\section{Theoretical Results}

\subsection{Proofs}

This section provides detailed proofs on how Eq.~(2) is simplified to Eq.~(5) in the main paper. We begin by introducing the following two propositions on MI.

\begin{proposition}
Given any three random variables $\bm{x}$, $\bm{y}$ and $\bm{z}$, with a joint distribution $p(\bm{x}, \bm{y}, \bm{z})$. If $\bm{y}$ and $\bm{z}$ are independent and conditionally independent, i.e., $p(\bm{y}, \bm{z}) = p(\bm{y})p(\bm{z})$ and $p(\bm{y}, \bm{z}|\bm{x}) = p(\bm{y}|\bm{x})p(\bm{z}|\bm{x})$, we have that,
\begin{equation*}
\mathcal{I}(\bm{x}; \bm{y}, \bm{z}) = \mathcal{I}(\bm{x}; \bm{y}) + \mathcal{I}(\bm{x}; \bm{z}).
\end{equation*}
\label{supp:prop:dis}
\end{proposition}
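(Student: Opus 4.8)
The plan is to work directly from the definition of mutual information in Eq.~\eqref{eq:mi} and exploit the two independence hypotheses to factor the relevant density ratio. First I would expand the joint term, treating $(\bm{y}, \bm{z})$ as a single random variable, as $\mathcal{I}(\bm{x}; \bm{y}, \bm{z}) = \mathbb{E}_{p(\bm{x},\bm{y},\bm{z})}\left[\log \frac{p(\bm{x}, \bm{y}, \bm{z})}{p(\bm{x})\,p(\bm{y}, \bm{z})}\right]$.

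The key algebraic step is to rewrite the density ratio inside the logarithm. Using conditional independence $p(\bm{y}, \bm{z}|\bm{x}) = p(\bm{y}|\bm{x})p(\bm{z}|\bm{x})$ I factor the numerator as $p(\bm{x})\,p(\bm{y}|\bm{x})\,p(\bm{z}|\bm{x})$, and using marginal independence $p(\bm{y}, \bm{z}) = p(\bm{y})p(\bm{z})$ I factor the denominator as $p(\bm{x})\,p(\bm{y})\,p(\bm{z})$. Cancelling the common factor $p(\bm{x})$ leaves $\frac{p(\bm{y}|\bm{x})\,p(\bm{z}|\bm{x})}{p(\bm{y})\,p(\bm{z})}$, and rewriting each conditional via $p(\bm{y}|\bm{x}) = p(\bm{x}, \bm{y})/p(\bm{x})$ regroups this into the product $\frac{p(\bm{x}, \bm{y})}{p(\bm{x})p(\bm{y})} \cdot \frac{p(\bm{x}, \bm{z})}{p(\bm{x})p(\bm{z})}$.

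Once the ratio is a product, taking the logarithm splits it into a sum of two pointwise log-ratios, and linearity of expectation separates $\mathcal{I}(\bm{x}; \bm{y}, \bm{z})$ into two expectations against $p(\bm{x}, \bm{y}, \bm{z})$. In each summand the integrand depends on only two of the three variables, so I would marginalize out the nuisance variable: integrating the $(\bm{x}, \bm{y})$-term against the full joint collapses to an expectation against $p(\bm{x}, \bm{y})$, which is exactly $\mathcal{I}(\bm{x}; \bm{y})$, and symmetrically the $(\bm{x}, \bm{z})$-term yields $\mathcal{I}(\bm{x}; \bm{z})$. Summing gives the claim.

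I expect no genuine obstacle here; the argument is essentially bookkeeping. The one point demanding care is ensuring that \emph{both} hypotheses are invoked — conditional independence to factor the joint numerator and unconditional independence to factor the marginal denominator — since dropping either breaks the regrouping into a clean product. A secondary subtlety is the final marginalization: it is valid precisely because each log-ratio factor no longer involves the variable being integrated out, so the expectation against the full joint legitimately reduces to an expectation against the appropriate two-variable marginal.
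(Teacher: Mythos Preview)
Your proposal is correct and follows essentially the same approach as the paper: both use the two independence hypotheses to factor $p(\bm{y},\bm{z}|\bm{x})/p(\bm{y},\bm{z})$ into $\frac{p(\bm{y}|\bm{x})}{p(\bm{y})}\cdot\frac{p(\bm{z}|\bm{x})}{p(\bm{z})}$, split the logarithm, and marginalize each term down to the corresponding pairwise MI. The paper reaches the factored expression by first applying the chain rule $\mathcal{I}(\bm{x};\bm{y},\bm{z}) = \mathcal{I}(\bm{x};\bm{y}|\bm{z}) + \mathcal{I}(\bm{x};\bm{z})$ and then recombining, whereas you go there directly from the definition---your route is slightly more economical but substantively identical.
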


\begin{proof}
We start by applying the chain rule to $\mathcal{I}(\bm{x}; \bm{y}, \bm{z})$:
\begin{equation*}
\mathcal{I}(\bm{x}; \bm{y}, \bm{z}) = \mathcal{I}(\bm{x}; \bm{y}|\bm{z}) + \mathcal{I}(\bm{x}; \bm{z}).
\end{equation*}
Then we have,
\begin{align*}
\mathcal{I}(\bm{x}; \bm{y}|\bm{z}) &= \sum_{\bm{z}} \sum_{\bm{y}} \sum_{\bm{x}} p(\bm{x}, \bm{y}, \bm{z}) \log \frac{p(\bm{x}, \bm{y}, \bm{z})p(\bm{z})}{p(\bm{x}, \bm{z})p(\bm{y}, \bm{z})}\\
&=\sum_{\bm{z}} \sum_{\bm{y}} \sum_{\bm{x}} p(\bm{x}, \bm{y}, \bm{z}) \log \frac{p(\bm{x}|\bm{y}, \bm{z})}{p(\bm{x}|\bm{z})},
\end{align*}
and,
\begin{align*}
\mathcal{I}(\bm{x}; \bm{z}) &= \sum_{\bm{z}} \sum_{\bm{x}} p(\bm{x}, \bm{z}) \log \frac{p(\bm{x}, \bm{z})}{p(\bm{x})p(\bm{z})}\\
&=\sum_{\bm{z}} \sum_{\bm{x}} p(\bm{x})p(\bm{z}|\bm{x}) \log \frac{p(\bm{x}|\bm{z})p(\bm{z})}{p(\bm{x})p(\bm{z})}\\
&=\sum_{\bm{z}} \sum_{\bm{x}} p(\bm{x})p(\bm{z}|\bm{x}) \log \frac{p(\bm{x}|\bm{z})}{p(\bm{x})}\\
&=\sum_{\bm{z}} \sum_{\bm{y}} \sum_{\bm{x}} p(\bm{x})p(\bm{y}|\bm{x})p(\bm{z}|\bm{x}) \log \frac{p(\bm{x}|\bm{z})}{p(\bm{x})}\\
&=\sum_{\bm{z}} \sum_{\bm{y}} \sum_{\bm{x}} p(\bm{x})p(\bm{y}, \bm{z}|\bm{x}) \log \frac{p(\bm{x}|\bm{z})}{p(\bm{x})}\\
&=\sum_{\bm{z}} \sum_{\bm{y}} \sum_{\bm{x}} p(\bm{x}, \bm{y}, \bm{z}) \log \frac{p(\bm{x}|\bm{z})}{p(\bm{x})}.
\end{align*}
After combining them together, we can show that,
\begin{align*}
\mathcal{I}(\bm{x}; \bm{y}, \bm{z}) &= \sum_{\bm{z}} \sum_{\bm{y}} \sum_{\bm{x}} p(\bm{x}, \bm{y}, \bm{z}) \log \frac{p(\bm{x}|\bm{y}, \bm{z})}{p(\bm{x})}\\
&= \sum_{\bm{z}} \sum_{\bm{y}} \sum_{\bm{x}} p(\bm{x}, \bm{y}, \bm{z}) \log \frac{p(\bm{y}, \bm{z}|\bm{x})}{p(\bm{y}, \bm{z})}\\
&= \sum_{\bm{z}} \sum_{\bm{y}} \sum_{\bm{x}} p(\bm{x}, \bm{y}, \bm{z}) \log \frac{p(\bm{y}|\bm{x})p(\bm{z}|\bm{x})}{p(\bm{y})p(\bm{z})}\\
&= \sum_{\bm{z}} \sum_{\bm{y}} \sum_{\bm{x}} p(\bm{x}, \bm{y}, \bm{z}) \log \left[ \frac{p(\bm{y}|\bm{x})}{p(\bm{y})}\frac{p(\bm{z}|\bm{x})}{p(\bm{z})}\right].
\end{align*}
Hence, we can rewrite $\mathcal{I}(\bm{x}; \bm{y}, \bm{z})$ as:
\begin{multline*}
\mathcal{I}(\bm{x}; \bm{y}, \bm{z}) = \sum_{\bm{z}} \sum_{\bm{y}} \sum_{\bm{x}} p(\bm{x}, \bm{y}, \bm{z}) \log \frac{p(\bm{y}|\bm{x})}{p(\bm{y})}\\ + \sum_{\bm{z}} \sum_{\bm{y}} \sum_{\bm{x}} p(\bm{x}, \bm{y}, \bm{z}) \log \frac{p(\bm{z}|\bm{x})}{p(\bm{z})}.
\end{multline*}
According to the definition of MI, we have that,
\begin{align*}
&\sum_{\bm{z}} \sum_{\bm{y}} \sum_{\bm{x}} p(\bm{x}, \bm{y}, \bm{z}) \log \frac{p(\bm{y}|\bm{x})}{p(\bm{y})}\\
=& \sum_{\bm{z}} \sum_{\bm{y}} \sum_{\bm{x}} p(\bm{x})p(\bm{y}|\bm{x})p(\bm{z}|\bm{x}) \log \frac{p(\bm{y}|\bm{x})}{p(\bm{y})}\\
=& \sum_{\bm{y}} \sum_{\bm{x}} p(\bm{x}, \bm{y}) \log \frac{p(\bm{y}|\bm{x})}{p(\bm{y})} = \mathcal{I}(\bm{x}; \bm{y}),
\end{align*}
and,
\begin{align*}
&\sum_{\bm{z}} \sum_{\bm{y}} \sum_{\bm{x}} p(\bm{x}, \bm{y}, \bm{z}) \log \frac{p(\bm{z}|\bm{x})}{p(\bm{z})}\\
=& \sum_{\bm{z}} \sum_{\bm{y}} \sum_{\bm{x}} p(\bm{x})p(\bm{y}|\bm{x})p(\bm{z}|\bm{x}) \log \frac{p(\bm{z}|\bm{x})}{p(\bm{z})}\\
=& \sum_{\bm{z}} \sum_{\bm{x}} p(\bm{x}, \bm{z}) \log \frac{p(\bm{z}|\bm{x})}{p(\bm{z})} = \mathcal{I}(\bm{x}; \bm{z}),
\end{align*}
which prove the proposition.
\end{proof}

The next proposition is a minor adaptation of~\cite{shwartz2017opening} according to Data Processing Inequality (DPI)~\cite{cover2012elements}.

\begin{proposition}[Shwartz \& Tishby~\cite{shwartz2017opening}]
Given any two random variables $\bm{x}$ and $\bm{y}$, and any representation variable $\bm{z}$, defined as a (possibly stochastic) map of the input $\bm{x}$, we have the following DPI chain:
\begin{equation*}
\mathcal{I}(\bm{x}; \bm{y}) \geqslant \mathcal{I}(\bm{z}; \bm{y}).
\end{equation*}
\label{supp:prop:dpi}
\end{proposition}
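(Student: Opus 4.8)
The plan is to recognize that the hypothesis ``$\bm{z}$ is a (possibly stochastic) map of the input $\bm{x}$'' is precisely the statement that $\bm{y} \to \bm{x} \to \bm{z}$ forms a Markov chain. Because $\bm{z}$ is produced from $\bm{x}$ alone, its conditional law cannot depend on $\bm{y}$, i.e. $p(\bm{z}\,|\,\bm{x}, \bm{y}) = p(\bm{z}\,|\,\bm{x})$. Equivalently, $\bm{y}$ and $\bm{z}$ are conditionally independent given $\bm{x}$, which yields $\mathcal{I}(\bm{y}; \bm{z}\,|\,\bm{x}) = 0$. Pinning down this conditional-independence reduction is the conceptual core of the argument; everything afterwards is a routine chain-rule manipulation.

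Next I would expand the joint mutual information $\mathcal{I}(\bm{y}; \bm{x}, \bm{z})$ in two ways via the chain rule for mutual information:
\[
\mathcal{I}(\bm{y}; \bm{x}, \bm{z}) = \mathcal{I}(\bm{y}; \bm{x}) + \mathcal{I}(\bm{y}; \bm{z}\,|\,\bm{x}) = \mathcal{I}(\bm{y}; \bm{z}) + \mathcal{I}(\bm{y}; \bm{x}\,|\,\bm{z}).
\]
Substituting the vanishing term $\mathcal{I}(\bm{y}; \bm{z}\,|\,\bm{x}) = 0$ established above gives
\[
\mathcal{I}(\bm{y}; \bm{x}) = \mathcal{I}(\bm{y}; \bm{z}) + \mathcal{I}(\bm{y}; \bm{x}\,|\,\bm{z}).
\]
Since conditional mutual information is nonnegative, $\mathcal{I}(\bm{y}; \bm{x}\,|\,\bm{z}) \geqslant 0$, so dropping it yields $\mathcal{I}(\bm{y}; \bm{x}) \geqslant \mathcal{I}(\bm{y}; \bm{z})$. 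Invoking the symmetry $\mathcal{I}(\bm{x}; \bm{y}) = \mathcal{I}(\bm{y}; \bm{x})$ and $\mathcal{I}(\bm{z}; \bm{y}) = \mathcal{I}(\bm{y}; \bm{z})$ then gives exactly the claimed chain $\mathcal{I}(\bm{x}; \bm{y}) \geqslant \mathcal{I}(\bm{z}; \bm{y})$.

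The main (and only real) obstacle is the first step: correctly translating the informal phrase ``possibly stochastic map'' into the precise Markov property $p(\bm{z}\,|\,\bm{x},\bm{y}) = p(\bm{z}\,|\,\bm{x})$, and hence $\mathcal{I}(\bm{y};\bm{z}\,|\,\bm{x})=0$. The remaining steps are the textbook derivation of the data processing inequality and involve no delicate estimation. An alternative would be a fully ``from scratch'' derivation directly from the entropy/KL definitions without citing the chain rule, but this merely reproduces the same algebra with extra bookkeeping; appealing to the chain rule and to nonnegativity of conditional mutual information (both standard, as in \cite{cover2012elements}) is the cleanest route.
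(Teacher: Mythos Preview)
Your argument is correct and is exactly the standard textbook derivation of the data processing inequality: identify the Markov structure $\bm{y}\to\bm{x}\to\bm{z}$ from the hypothesis, use it to kill $\mathcal{I}(\bm{y};\bm{z}\,|\,\bm{x})$, expand $\mathcal{I}(\bm{y};\bm{x},\bm{z})$ two ways via the chain rule, and invoke nonnegativity of conditional mutual information. There is nothing to fault here.

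As for comparison: the paper does not actually supply its own proof of this proposition. It is stated as a citation of Shwartz \& Tishby~\cite{shwartz2017opening} and the classical DPI from Cover \& Thomas~\cite{cover2012elements}, and then immediately applied in the subsequent derivation. So your proposal goes strictly beyond what the paper provides, filling in the (well-known) argument that the paper simply takes for granted.
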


As defined in the main paper, we let $\bm{x}^i$ denote the given 2D pose from the $i$-th view. We are interested in learning an encoding network $E$ that produces a view representation $\bm{z}_v^i$, and a pose representation $\bm{z}_p^i$ from the input $\bm{x}^i$. For simplicity, we define an optimal intermediate representation $\bm{z}^i$ that satisfies $p(\cdot, \bm{z}^i) = p(\cdot, \bm{z}_p^i, \bm{z}_v^i)$, and we have $\mathcal{I}(\cdot; \bm{z}^i) = \mathcal{I}(\cdot; \bm{z}_p^i, \bm{z}_v^i)$. Therefore, based on Proposition~\ref{supp:prop:dis}, the following equation holds:
\begin{equation*}
\mathcal{I}(\bm{x}^i; \bm{z}_p^j, \bm{z}_v^i) = \mathcal{I}(\bm{x}^i; \bm{z}_p^j) + \mathcal{I}(\bm{x}^i; \bm{z}_v^i).
\end{equation*}
According to Proposition~\ref{supp:prop:dpi}, we have that,
\begin{equation*}
\mathcal{I}(\bm{x}^i; \bm{z}_p^j) \geqslant \mathcal{I}(\bm{z}^i; \bm{z}_p^j) = \mathcal{I}(\bm{z}_p^i, \bm{z}_v^i; \bm{z}_p^j).
\end{equation*}
After applying Proposition~\ref{supp:prop:dis}, we have that,
\begin{equation*}
\mathcal{I}(\bm{z}_p^i, \bm{z}_v^i; \bm{z}_p^j) = \mathcal{I}(\bm{z}_p^i; \bm{z}_p^j) + \mathcal{I}(\bm{z}_v^i; \bm{z}_p^j).
\end{equation*}
As $\bm{z}_v^i$ and $\bm{z}_p^j$ are independent (mutually exclusive), it holds that $\mathcal{I}(\bm{z}_v^i; \bm{z}_p^j) = 0$. Therefore, we have that,
\begin{equation*}
\mathcal{I}(\bm{x}^i; \bm{z}_p^j) \geqslant \mathcal{I}(\bm{z}_p^i, \bm{z}_v^i; \bm{z}_p^j) = \mathcal{I}(\bm{z}_p^i; \bm{z}_p^j).
\end{equation*}
Similarly, we have that,
\begin{align*}
\mathcal{I}(\bm{x}^i; \bm{z}_v^i) &\geqslant \mathcal{I}(\bm{z}^i; \bm{z}_v^i) = \mathcal{I}(\bm{z}_p^i, \bm{z}_v^i; \bm{z}_v^i)\\
&= \mathcal{I}(\bm{z}_p^i; \bm{z}_v^i) + \mathcal{I}(\bm{z}_v^i; \bm{z}_v^i)\\
&= \mathcal{I}(\bm{z}_p^i; \bm{z}_v^i) + \mathcal{I}(\bm{z}_v^i; \bm{z}_v^i)\\
&= \mathcal{H}(\bm{z}_v^i),
\end{align*}
where $\mathcal{H}$ is the Shannon entropy which is always non-negative, \ie, $\mathcal{H} \geqslant 0$. Then, we have that,
\begin{equation*}
\mathcal{I}(\bm{x}^i; \bm{z}_p^j, \bm{z}_v^i) \geqslant \mathcal{I}(\bm{z}_p^i; \bm{z}_p^j) + \mathcal{H}(\bm{z}_v^i) \geqslant \mathcal{I}(\bm{z}_p^i; \bm{z}_p^j).
\end{equation*}
Finally, we can obtain that,
\begin{align*}
&\sum_{i} \mathcal{I}(\bm{x}^i; \bm{z}_p^i, \bm{z}_v^i) + \sum_{i \neq j} \mathcal{I}(\bm{x}^i; \bm{z}^j_p, \bm{z}^i_v)\\
\geqslant& \sum_{i} \mathcal{I}(\bm{x}^i; \bm{z}_p^i, \bm{z}_v^i) + \sum_{i \neq j} \mathcal{I}(\bm{z}^i_p; \bm{z}^j_p),
\end{align*}
which demonstrates the claim.

\subsection{Relation to Cross Reconstruction}

In this section, we provide an intuitive explanation on the relationship between the proposed cross-view MI maximization and the conventional methods based on cross reconstruction~\cite{nie2020unsupervised,peng2017reconstruction,rhodin2019neural,rhodin2018unsupervised}. According to~\cite{hjelm2019learning}, in the context of representation learning, given the input $\bm{x}$ and its representation $\bm{z}$ encoded by a neural network, the reconstruction error can be related to the MI as follows:
\begin{align*}
\mathcal{I}(\bm{x}; \bm{z}) &= \mathcal{H}(\bm{x}) - \mathcal{H}(\bm{x}|\bm{z}) \geqslant - \mathcal{H}(\bm{x}|\bm{z})\\
&= \sum_{\bm{z}} \sum_{\bm{x}} p(\bm{x}, \bm{z}) \log p(\bm{x}|\bm{z})\\
&= \sum_{\bm{z}} \sum_{\bm{x}} p(\bm{x})p(\bm{z}|\bm{x}) \log p(\bm{x}|\bm{z})\\
&\geqslant \sum_{\bm{x}} p(\bm{x}) \sum_{\bm{z}} p(\bm{z}|\bm{x}) \log p(\bm{x}|\bm{z})\\
&= \mathbb{E}_{\bm{x} \sim p(\bm{x})}\left\{ \mathbb{E}_{\bm{z} \sim p(\bm{z}|\bm{x})} \left[ \log p(\bm{x}|\bm{z}) \right] \right\},
\end{align*}
where the inequality in the second-to-last line is achieved according to Jensen's inequality if we assume that the probability density functions are convex.

In typical settings of reconstruction, $p(\bm{z}|\bm{x})$ can be interpreted as an encoder while $p(\bm{x}|\bm{z})$ is the decoder. For example, the Variational Auto-Encoders (VAEs)~\cite{kingma2014auto} approximate $p(\bm{z}|\bm{x})$ by a tractable variational distribution $q(\bm{z}|\bm{x})$ with the KL divergence term $\mathbb{D}_\text{KL}[q(\bm{z}|\bm{x}) \| p(\bm{z})]$ which ensures that the learned distribution $q$ is similar to the true prior distribution. To maximize the above objective, reconstruction-type methods usually minimize the mean squared error between the input and reconstruction if a Gaussian distribution is assumed or binary cross-entropy loss if a Bernoulli distribution is assumed. Therefore, intuitively, the reconstruction-type objective is a lower bound of MI, and similar conclusions exist for other generative models based on reconstruction~\cite{alemi2017deep,higgins2017beta}.

We can easily extend the above formulation to the proposed cross-view MI maximization setup in the main paper, where we have that,
\begin{align*}
&\sum_{i} \mathcal{I}(\bm{x}^i; \bm{z}_p^i, \bm{z}_v^i) + \sum_{i \neq j} \mathcal{I}(\bm{x}^i; \bm{z}^j_p, \bm{z}^i_v)\\
\geqslant& \sum_{i} \mathbb{E}_{p(\bm{x}^i)}\left\{ \mathbb{E}_{p(\bm{z}_p^i, \bm{z}_v^i|\bm{x}^i)} \left[ \log p(\bm{x}^i|\bm{z}_p^i, \bm{z}_v^i) \right] \right\}\\
&+ \sum_{i \neq j} \mathbb{E}_{p(\bm{x}^i, \bm{x}^j)}\left\{ \mathbb{E}_{p(\bm{z}_p^j, \bm{z}_v^i|\bm{x}^i, \bm{x}^j)} \left[ \log p(\bm{x}^i|\bm{z}_p^j, \bm{z}_v^i) \right] \right\},
\end{align*}
where we can see that in an approximate sense, the existing methods for view-disentangled representation learning~\cite{nie2020unsupervised,peng2017reconstruction,rhodin2019neural,rhodin2018unsupervised} based on cross-reconstruction maximize a lower bound of the proposed cross-view MI maximization.

\section{Implementation Details}

\subsection{Representation Learning}

The backbone network architecture for our encoding network $E$ is based on~\cite{martinez2017simple}. We use two residual blocks, batch normalization, 0.25 dropout, and no maximum weight norm constraint~\cite{martinez2017simple}. Both the likelihood estimation network $Q$ and discriminator $D$ are implemented by multi-layer perceptrons (MLPs). To be specific, $Q$ consists of two fully-connected layers where the first layer is followed by the batch normalization and ELU activation~\cite{clevert2016fast}; $D$ contains three fully-connected layers where the first two layers are followed by the ReLU activation. All these networks are trained using AdaGrad~\cite{duchi2011adaptive} with a fixed learning rate of 0.02 for optimization. For fair comparisons, all the representation learning methods compared to in the experiments also use the same architecture and training setup as described here. We also note that the learned representations are fixed during downstream training.

\subsection{Action Recognition}

\begin{table}[t]
\begin{center}
\resizebox{.92\linewidth}{!}{
\begin{tabular}{c|c|c}
\toprule
Layer & Output Size & Setting\\
\midrule
Conv1D & $166 \times 64$ & $1 \times 7$, stride 2, BN, ReLU, 0.5\\
Conv1D & $83 \times 128$ & $1 \times 7$, stride 2, BN, ReLU, 0.5\\
Conv1D & $42 \times 256$ & $1 \times 7$, stride 2, BN, ReLU, 0.5\\
\midrule
Pooling & $256$ & global average pooling\\
Dense & $14$ & SoftMax\\
\bottomrule
\end{tabular}}
\end{center}
\caption{Architecture used on Penn Action~\cite{zhang2013actemes}. The setting of ``$1 \times 7$, stride 2, BN, ReLU, 0.5'' refers to 1D Convolution with kernel size of $1 \times 7$ and stride 2 followed by batch normalization (BN), ReLU activation, and a dropout layer with 0.5 drop rate.}
\label{tbl:supp:arch_penn}
\end{table}

\begin{table}[t]
\begin{center}
\resizebox{.96\linewidth}{!}{
\begin{tabular}{c|c|c}
\toprule
Layer & Output Size & Setting\\
\midrule
Conv1D & $300 \times 64$ & $1 \times 7$, stride 1, BN, ReLU\\
\midrule
R-Conv1D & & $1 \times 7$, stride 2, BN, ReLU, 0.5\\
& & $1 \times 5$, stride 1, BN, ReLU, 0.5\\
& & $1 \times 3$, stride 1, BN\\
Shortcut & $150 \times 64$ & $1 \times 1$, stride 2, BN, ReLU\\
\midrule
R-Conv1D & & $1 \times 7$, stride 2, BN, ReLU, 0.5\\
& & $1 \times 5$, stride 1, BN, ReLU, 0.5\\
& & $1 \times 3$, stride 1, BN\\
Shortcut & $75 \times 128$ & $1 \times 1$, stride 2, BN, ReLU\\
\midrule
R-Conv1D & & $1 \times 7$, stride 1, BN, ReLU, 0.5\\
& & $1 \times 5$, stride 1, BN, ReLU, 0.5\\
& & $1 \times 3$, stride 1, BN\\
Shortcut & $75 \times 256$ & $1 \times 1$, stride 1, BN, ReLU\\
\midrule
Pooling & $256$ & global average pooling\\
Dense & $49$ & SoftMax\\
\bottomrule
\end{tabular}}
\end{center}
\caption{Architecture used on NTU-RGB+D~\cite{shahroudy2016ntu}. R-Conv1D represents the residual layer introduced in~\cite{he2016deep}. The setting of ``$1 \times 7$, stride 2, BN, ReLU, 0.5'' refers to 1D Convolution with kernel size of $1 \times 7$ and stride 2 followed by batch normalization (BN), ReLU activation, and a dropout layer with 0.5 drop rate.}
\label{tbl:supp:arch_ntu}
\end{table}

\p{Penn Action} We use a simple temporal convolution network to extract temporal features from per-frame pose representations generated by the encoding network. Table~\ref{tbl:supp:arch_penn} presents the architecture of this network. We use Adam~\cite{kingma2014adam} with a fixed learning rate of $1.0 \times 10^{-5}$ for optimization. We set the size of mini-batches to 64 and the network is trained for $1 \times 10^6$ iterations. During network training, we perform data augmentation by randomly horizontally flipping all frames in a video.

\p{NTU-RGB+D} We use ResNet1D which is a modified version of~\cite{he2016deep} as the backbone network for action recognition on this dataset. Its detailed network architecture is shown in Table~\ref{tbl:supp:arch_ntu}. We use Adam~\cite{kingma2014adam} with a fixed learning rate of $1.0 \times 10^{-3}$ for training the network. We set the size of mini-batches to 64 and the network is optimized for $1 \times 10^6$ iterations. Following the practice of~\cite{yan2018spatial}, no data augmentation is performed during training on this dataset. We only use single-person action categories, including action labels from A1 to A49, in this experiment.
\section{Additional Results}

\p{Effectiveness of Camera Augmentation} We evaluate the effectiveness of camera augmentation by training a variant of CV-MIM where camera augmentation is not utilized. As shown in Table~\ref{tbl:supp:ca}, training with camera augmentation leads to around 1\% and 2\% performance improvements on Penn Action and NTU-RGB+D, respectively.

\p{More Results on NTU-RGB+D} We report the classification accuracy and standard deviation of different models on NTU-RGB+D with the setting of single-shot cross-view action recognition averaged over five repeated runs. Table~\ref{tbl:supp:ntu_sscv} shows the results. We can see that our method achieves the best mean accuracy and smallest standard deviations.

\begin{table}[t]
\begin{center}
\resizebox{.9\linewidth}{!}{
\begin{tabular}{c|cc}
\toprule
Methods & Penn Action~\cite{zhang2013actemes} & NTU-RGB+D~\cite{shahroudy2016ntu}\\
\midrule
CV-MIM & \textbf{91.75 $\pm$ 0.24} & \textbf{56.50 $\pm$ 0.13}\\
CV-MIM w/o CA & 90.75 $\pm$ 0.30 & 54.50 $\pm$ 0.22\\
\bottomrule
\end{tabular}}
\end{center}
\caption{Classification accuracy (\%) and standard deviation of CV-MIM with or without camera augmentation (CA) on Penn Action~\cite{zhang2013actemes} and NTU-RGB+D~\cite{shahroudy2016ntu} with the setting of single-shot cross-view action recognition.}
\label{tbl:supp:ca}
\end{table}

\begin{table*}[t]
\begin{center}
\resizebox{.94\linewidth}{!}{
\begin{tabular}{c|c|cccccc|c}
\toprule
Methods & VD & C1-R1 & C1-R2 & C2-R1 & C2-R2 & C3-R1 & C3-R2 & Average\\
\midrule
Res-TCN~\cite{kim2017interpretable} & & 40.78 $\pm$ 0.39 & 39.71 $\pm$ 0.66 & 30.02 $\pm$ 0.63 & 48.45 $\pm$ 0.37 & 48.90 $\pm$ 0.42 & 29.25 $\pm$ 0.37 & 39.52 $\pm$ 0.23\\
\midrule
Auto-Encoder & \checkmark & 42.76 $\pm$ 0.93 & 40.79 $\pm$ 0.70 & 32.13 $\pm$ 1.77 & 44.86 $\pm$ 0.59 & 42.39 $\pm$ 2.60 & 32.38 $\pm$ 1.34 & 39.22 $\pm$ 0.58\\
VAE~\cite{kingma2014auto} & \checkmark & 49.96 $\pm$ 0.61 & 50.92 $\pm$ 0.61 & 38.50 $\pm$ 0.66 & 53.63 $\pm$ 0.30 & 54.40 $\pm$ 0.49 & 36.94 $\pm$ 0.50 & 47.39 $\pm$ 0.26\\
$\beta$-VAE~\cite{alemi2017deep,higgins2017beta} & \checkmark & 49.06 $\pm$ 1.07 & 48.81 $\pm$ 0.67 & 40.58 $\pm$ 0.84 & 51.39 $\pm$ 0.76 & 52.09 $\pm$ 0.44 & 36.87 $\pm$ 0.60 & 46.47 $\pm$ 0.21\\
InfoNCE~\cite{oord2018representation} & \checkmark & 43.11 $\pm$ 0.32 & 42.60 $\pm$ 0.35 & 35.11 $\pm$ 0.88 & 47.01 $\pm$ 0.47 & 48.15 $\pm$ 0.69 & 34.37 $\pm$ 1.65 & 41.72 $\pm$ 0.37\\
DIM~\cite{hjelm2019learning} & \checkmark & 41.71 $\pm$ 0.43 & 42.03 $\pm$ 0.80 & 32.63 $\pm$ 0.28 & 44.65 $\pm$ 0.70 & 43.75 $\pm$ 0.37 & 33.19 $\pm$ 0.51 & 39.66 $\pm$ 0.22\\
Pr-VIPE~\cite{sun2019view} & & 56.28 $\pm$ 0.22 & 56.95 $\pm$ 0.58 & 50.09 $\pm$ 0.73 & 50.38 $\pm$ 0.39 & 57.03 $\pm$ 0.32 & \textbf{55.41 $\pm$ 0.38} & 54.36 $\pm$ 0.28\\
\midrule
\textbf{CV-MIM} & \checkmark & \textbf{58.61 $\pm$ 0.31} & \textbf{59.67 $\pm$ 0.29} & \textbf{52.53 $\pm$ 0.32} & \textbf{58.34 $\pm$ 0.25} & \textbf{57.79 $\pm$ 0.31} & 52.08 $\pm$ 0.37 & \textbf{56.50 $\pm$ 0.13}\\
\bottomrule
\end{tabular}}
\end{center}
\caption{Classification accuracy (\%) and standard deviation of models on NTU-RGB+D~\cite{shahroudy2016ntu} with the setting of single-shot cross-view action recognition. C1, C2, and C3 are the camera identifiers; R1 and R2 are the replication numbers; one combination of them forms a unique camera view. Each time, models are trained using one view, and evaluated on all six views. We highlight view-disentangled (VD) representation learning methods. Best performances are highlighted in bold.}
\label{tbl:supp:ntu_sscv}
\end{table*}

\begin{table*}[t]
\begin{center}
\resizebox{.98\linewidth}{!}{
\begin{tabular}{c|c|ccccccccc}
\toprule
Methods & VD & 2.0\% (48) & 3.0\% (72) & 5.0\% (118) & 7.5\% (186) & 10.0\% (234) & 12.5\% (302) & 15.0\% (354) & 20.0\% (472) & All (2.31K)\\
\midrule
Temporal ConvNet & & 58.7 & 66.7 & 83.1 & 87.1 & 90.5 & 91.4 & 91.7 & 93.5 & \textbf{98.5}\\
\midrule
Auto-Encoder & \checkmark & 59.8 & 72.2 & 83.6 & 88.2 & 90.6 & 91.8 & 92.9 & 94.1 & 97.7\\
VAE~\cite{kingma2014auto} & \checkmark & 60.6 & 71.7 & 81.9 & 87.4 & 91.1 & 91.7 & 92.9 & 94.0 & 97.6\\
$\beta$-VAE~\cite{alemi2017deep,higgins2017beta} & \checkmark & 61.2 & 71.0 & 79.2 & 88.3 & 89.9 & 90.1 & 92.4 & 94.4 & 97.7\\
InfoNCE~\cite{oord2018representation} & \checkmark & 59.8 & 69.8 & 81.1 & 85.2 & 90.6 & 90.9 & 92.3 & 93.9 & 97.5\\
DIM~\cite{hjelm2019learning} & \checkmark & 57.6 & 68.1 & 80.0 & 84.3 & 87.7 & 90.0 & 90.6 & 94.3 & 97.3\\
\midrule
\textbf{CV-MIM} & \checkmark & \textbf{64.9} & \textbf{77.5} & \textbf{88.7} & \textbf{90.2} & \textbf{92.3} & \textbf{92.6} & \textbf{93.3} & \textbf{94.5} & 98.1\\
\bottomrule
\end{tabular}}
\end{center}
\caption{Comparisons of classification accuracy (\%) on Penn Action~\cite{zhang2013actemes} with the fully-supervised setting when the amount of training samples is varied. We highlight view-disentangled (VD) representation learning methods. Best performances are highlighted in bold.}
\label{tbl:supp:ls_penn}
\end{table*}

\begin{table*}[t]
\begin{center}
\resizebox{.98\linewidth}{!}{
\begin{tabular}{c|c|ccccccccc}
\toprule
Methods & VD & .5\% (145) & 1.0\% (318) & 2.0\% (640) & 3.0\% (926) & 5.0\% (1.51K) & 7.5\% (2.31K) & 10.0\% (3.4K) & 12.5\% (3.86K) & All (30.7K)\\
\midrule
Res-TCN~\cite{kim2017interpretable} & & 20.7 & 31.2 & 40.6 & 46.8 & 55.1 & 62.4 & 64.7 & 71.2 & \textbf{90.2}\\
\midrule
Auto-Encoder & \checkmark & 22.4 & 27.5 & 40.5 & 45.6 & 55.4 & 59.2 & 64.5 & 67.1 & 71.8\\
VAE~\cite{kingma2014auto} & \checkmark & 22.9 & 31.6 & 42.3 & 49.4 & 55.0 & 61.9 & 64.4 & 69.4 & 88.7\\
$\beta$-VAE~\cite{alemi2017deep,higgins2017beta} & \checkmark & 21.2 & 30.2 & 42.2 & 50.7 & 55.4 & 61.8 & 64.7 & 69.5 & 88.8\\
InfoNCE~\cite{oord2018representation} & \checkmark & 20.8 & 26.5 & 37.8 & 43.0 & 49.8 & 55.1 & 58.3 & 62.1 & 82.7\\
DIM~\cite{hjelm2019learning} & \checkmark & 20.2 & 23.8 & 35.0 & 40.0 & 47.6 & 52.8 & 57.0 & 59.9 & 82.4\\
\midrule
\textbf{CV-MIM} & \checkmark & \textbf{28.5} & \textbf{37.6} & \textbf{46.9} & \textbf{51.0} & \textbf{57.6} & \textbf{63.9} & \textbf{65.3} & \textbf{72.1} & 89.5\\
\bottomrule
\end{tabular}}
\end{center}
\caption{Results of top-1 classification accuracy (\%) on NTU-RGB+D~\cite{shahroudy2016ntu} with the cross-view benchmark when the amount of training samples is varied. We highlight view-disentangled (VD) representation learning methods. Best performances are highlighted in bold.}
\label{tbl:supp:ls_ntu}
\end{table*}

\p{More Results with Limited-Supervision} We provide additional comparisons with view-disentangled representation learning baselines under limited-supervision. Results on Penn Action~\cite{zhang2013actemes} and NTU-RGB+D~\cite{shahroudy2016ntu} are reported in Tables~\ref{tbl:supp:ls_penn} and~\ref{tbl:supp:ls_ntu}, respectively. We can see that the proposed CV-MIM outperforms other methods by a large margin consistently under different ratios of training data.

\p{View Classification} We explore the utility of learned view representations by applying them to the task of view classification on Penn Action~\cite{zhang2013actemes}. In this experiment, our target is to predict the view category, \ie, left, right, front, or back, for each frame in a video. This is achieved by training a linear classifier which takes the learned view representations as input and it is trained by the ground truth view labels provided by this dataset. We use AdaGrad~\cite{duchi2011adaptive} with a fixed learning rate of $1.0 \times 10^{-2}$ for training the classifier. We set the size of mini-batches to 64 and the classifier is optimized for $1 \times 10^4$ iterations. The data split of the fully-supervised setting is utilized to train and evaluate all view-disentangled representation learning approaches. We also compare our method to a baseline which directly takes raw 2D poses as input.

\begin{table*}[t]
\begin{center}
\resizebox{.7\linewidth}{!}{
\begin{tabular}{c|c|ccccc|c}
\toprule
Methods & 2D Pose & Auto-Encoder & VAE~\cite{kingma2014auto} & $\beta$-VAE~\cite{alemi2017deep,higgins2017beta} & InfoNCE~\cite{oord2018representation} & DIM~\cite{hjelm2019learning} & \textbf{CV-MIM}\\
\midrule
Accuracy (\%) & 62.13 & 66.24 & 66.49 & 65.63 & 66.14 & 65.71 & \textbf{67.28}\\
\bottomrule
\end{tabular}}
\end{center}
\caption{Results of view classification on Penn Action~\cite{zhang2013actemes}. Best performances are highlighted in bold.}
\label{tbl:supp:view}
\end{table*}

Table~\ref{tbl:supp:view} shows the results of each method. We observe that our model obtains the best performance among representation learning methods, and we also outperform the baseline taking 2D poses. These results demonstrate that our learned view representations manage to encode effective view information for 2D poses, and can serve as a strong model for view-relevant downstream tasks. 

\p{More Visual Results} We show more qualitative results when using the learned representations of our model for nearest neighbor retrieval on Human3.6M~\cite{ionescu2013human3} in Fig.~\ref{fig:supp:h36m}. We also show additional nearest neighbor retrieval results when applying our model on MPI-INF-3DHP~\cite{mehta2017monocular} in Fig.~\ref{fig:supp:3dhp}. Interestingly, we find that our learned representations are able to generalize to new views and new poses contained in MPI-INF-3DHP.

\begin{figure*}[t]
\begin{center}
  \includegraphics[width=\linewidth]{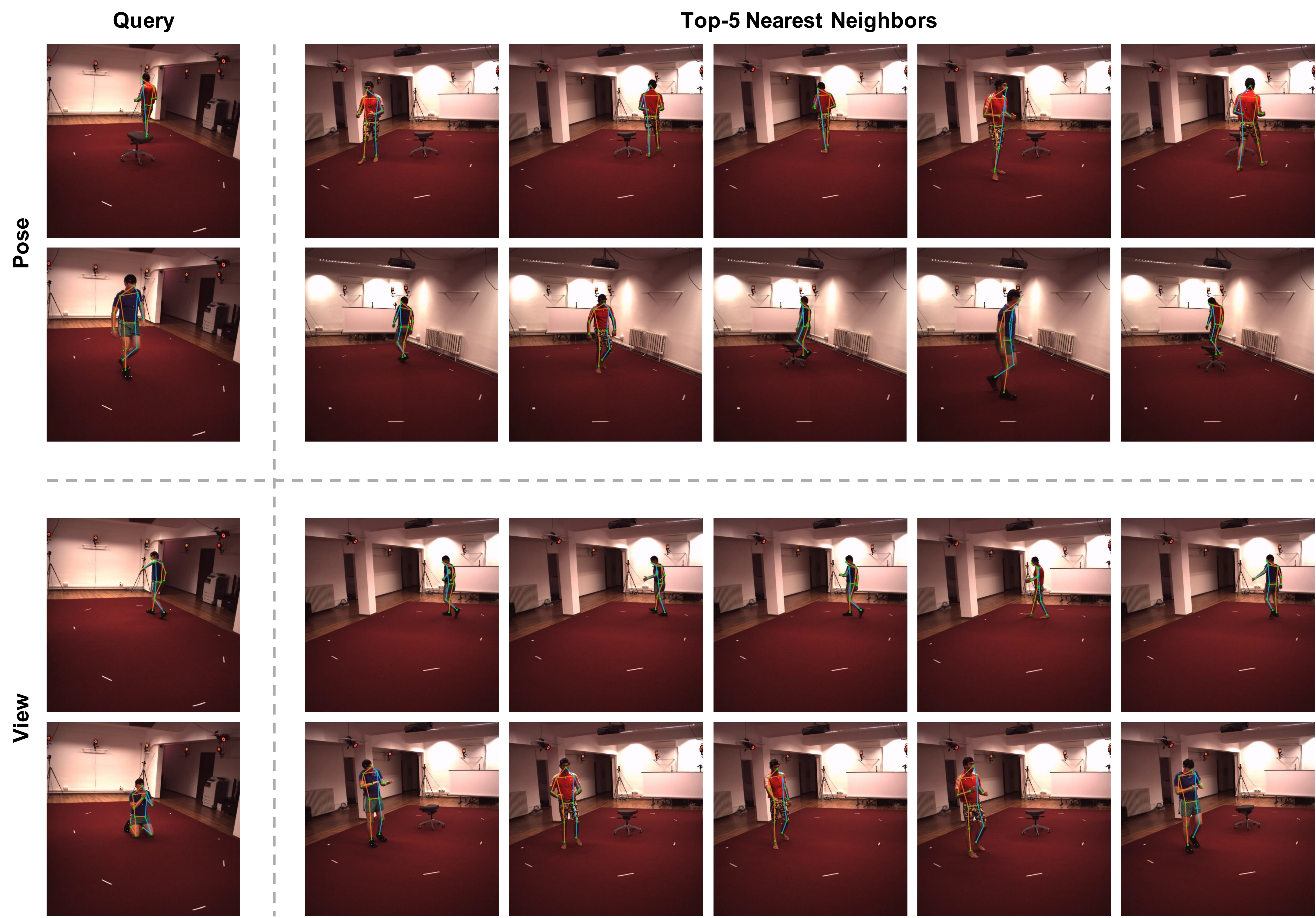}
\end{center}
  \caption{Nearest neighbors in the representation space using subjects S9 and S11 on Human3.6M~\cite{ionescu2013human3}. The first two rows use pose representations, while the second two rows use view representations. On each row, we show the query on the left and its top five nearest neighbors on the right.}
\label{fig:supp:h36m}
\end{figure*}

\begin{figure*}[t]
\begin{center}
  \includegraphics[width=\linewidth]{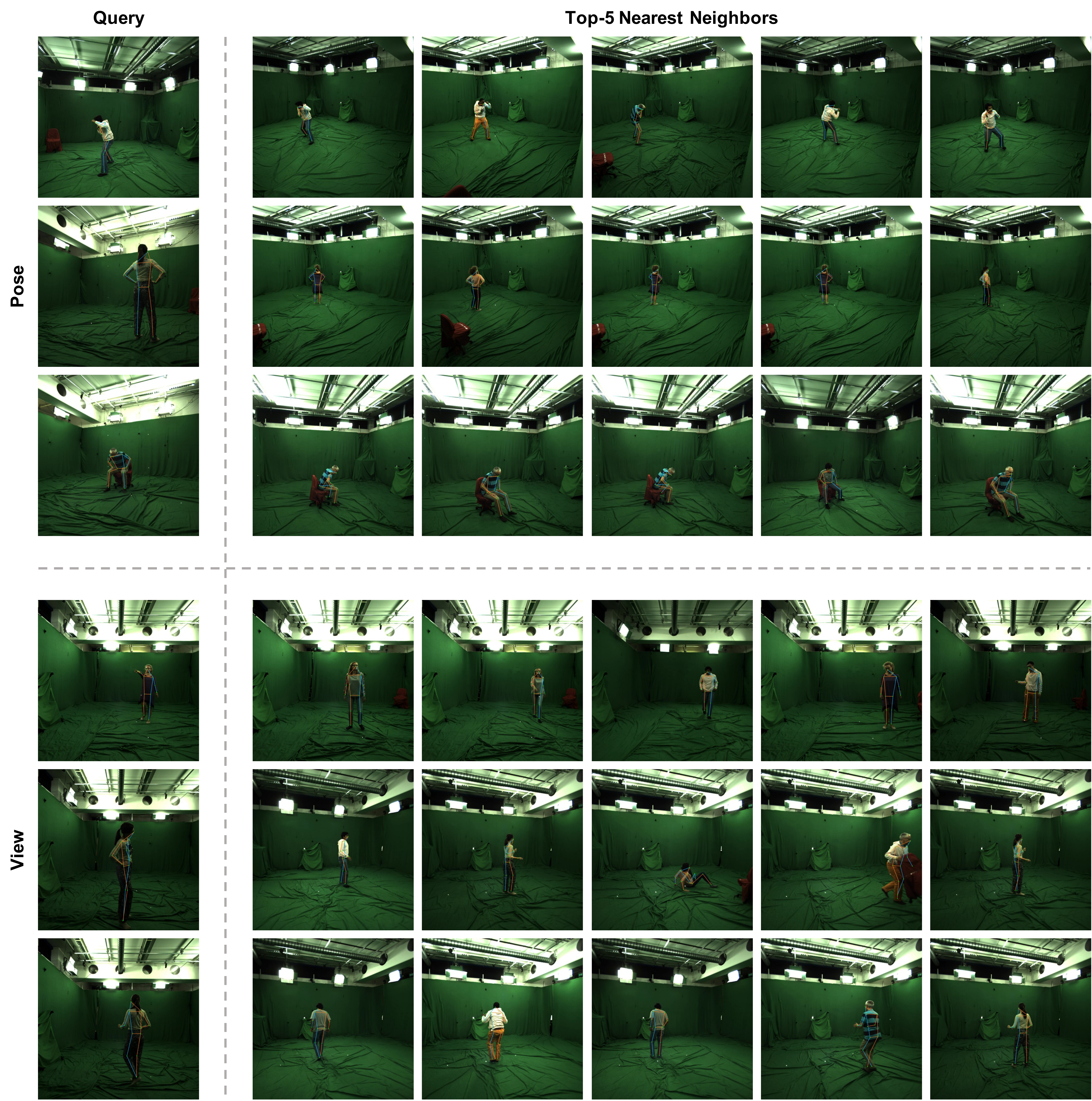}
\end{center}
  \caption{Nearest neighbors in the representation space on MPI-INF-3DHP~\cite{mehta2017monocular}. The first three rows use pose representations, while the second three rows use view representations. On each row, we show the query on the left and its top five nearest neighbors on the right.}
\label{fig:supp:3dhp}
\end{figure*}

\end{appendices}

{\small
\bibliographystyle{ieee_fullname}
\bibliography{egbib}
}

\end{document}